\documentclass{article}
\usepackage[T1]{fontenc}
\usepackage{iclr2027_conference,times}

\usepackage{amsmath,amssymb}
\usepackage{graphicx}
\usepackage{xcolor}
\usepackage{hyperref}
\usepackage{url}

\definecolor{paperlink}{RGB}{21,74,138}
\hypersetup{colorlinks=true, linkcolor=paperlink, citecolor=paperlink,
            urlcolor=paperlink}

\usepackage{booktabs}


\usepackage{amsthm}
\makeatletter
\@ifundefined{theorem}{\newtheorem{theorem}{Theorem}}{}
\@ifundefined{proposition}{\newtheorem{proposition}{Proposition}}{}
\@ifundefined{corollary}{\newtheorem{corollary}{Corollary}}{}
\@ifundefined{lemma}{}{}
\theoremstyle{remark}
\@ifundefined{remark}{}{}
\theoremstyle{plain}
\makeatother

\newcommand{\TV}{\operatorname{TV}}
\newcommand{\Law}{\operatorname{Law}}

\newcommand{\Pref}{\mathbb{P}}               

\newcommand{\snr}{\gamma}                    
\newcommand{\Mimpl}{\widetilde{M}}           
\newcommand{\Mideal}{M^{\circ}}              

\usepackage{algorithm}
\usepackage{algpseudocode}

\newcommand{\result}[2]{\ensuremath{#1\,{\scriptstyle\pm\,#2}}}

\title{P3: Persistent Particle Planning\\
for Constrained Diffusion Control}

\author{Hikmet Simsir, Mahyar Fardinfar, Ozgur S. Oguz \\
Department of Computer Engineering, Bilkent University \\
\texttt{\{hikmet.simsir, mahyar.fardinfar, ozgur.oguz\}@bilkent.edu.tr}}

\iclrfinalcopy

\begin{document}

\maketitle

\begin{abstract}
  Diffusion models provide expressive priors over trajectories, but
  adapting these priors to test-time constraints requires maintaining
  feasibility and consistency across successive control decisions.
  We introduce Persistent Particle Planning (P3), a sequential Monte Carlo
  framework for diffusion control that maintains a weighted population
  of candidate plans across replanning steps.
  At each control step, P3 shifts and partially re-noises the candidate
  trajectories, refines them under the latest observation, and uses
  constraint-aware weighting and resampling to select among alternative
  continuations without retraining the diffusion model.
  We consider denoising and replanning as one Feynman--Kac particle system
  and analyze it under an idealized repair.
  We prove that the re-noising depth controls how reliably a kept plan
  stays on its route, and that keeping a rare, well-separated route takes
  far fewer plans than rediscovering it by sampling from scratch.
  Experiments under multiple test-time constraint configurations show
  that population reuse reduces route switching and improves success
  without constraint violations.
  Because P3 refines earlier plans instead of redrawing them, it also
  needs fewer denoising iterations per replan.
  On maze-navigation tasks, it plans faster than both regenerated
  populations and methods that correct a single sampled plan by
  constrained optimization.
  Code and pretrained models are available at
  \url{https://github.com/p3-username/p3-anon}.
\end{abstract}

\section{Introduction}
\label{sec:introduction}

Diffusion models represent multiple ways of completing a task as a
distribution over trajectories \citep{janner2022planning,chi2023diffusion}.
Requirements not included in the training data, such as an added obstacle,
must then be imposed at test time, typically by correcting each sample
during denoising, for instance by projecting it onto the feasible set
\citep{christopher2024constrained,romer2025diffusion}.
Such corrections act on the samples of a single planning call.

\begin{figure}[t]
    \centering
    \includegraphics[width=\linewidth]{
        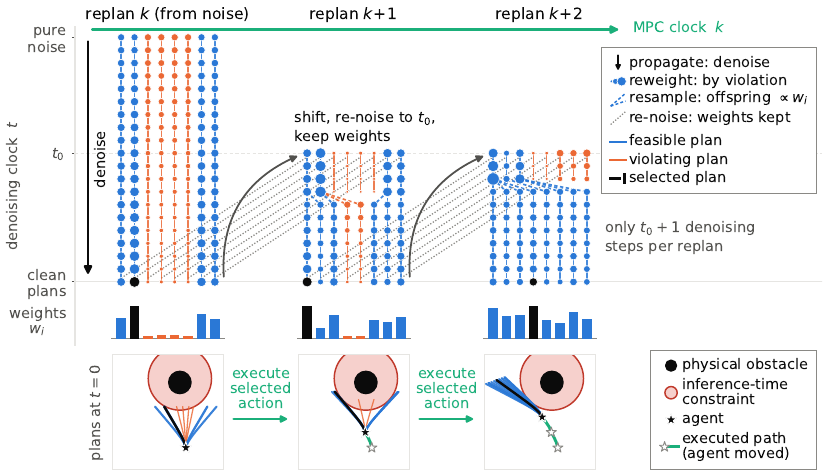
    }
    \vspace{-6pt}
    \caption{
        Persistent particle planning. Within each planning call (denoising
        clock, vertical), particles are denoised, weighted by their constraint
        violation and resampled. After the selected plan's first action is
        executed (control clock, horizontal), the population is shifted and
        re-noised to $t_0$ rather than redrawn from noise. Bottom: the
        corresponding plans around an obstacle and a test-time exclusion
        region.
    }
    \label{fig:persistent-planning}
    \vspace{-8pt}
\end{figure}

A controller, however, solves a sequence of planning problems.
In receding-horizon control, it executes the first action of a
finite-horizon plan and replans from the next observation
\citep{mayne2000constrained}, so consecutive problems share almost their
entire horizon.
Regenerating the plan from noise at every step discards that shared
solution and lets the controller alternate between modes, for example passing an obstacle on one side at
one step and on the other at the next.
Diffusion controllers reduce changes between successive plans by
ensembling or inpainting overlapping action chunks
\citep{zhao2023learning,black2025realtime}, selecting the sample most
coherent with the previous decision \citep{liu2025bidirectional}, or
warm-starting the next call from the previous plan by partial noising and
denoising, which Diffuser introduced to cut planning cost
\citep{janner2022planning} and which RDM \citep{zhou2023adaptive} and
streaming diffusion policies \citep{hoeg2025streaming} build into
replanning.
Warm-starting a single plan reuses the previous trajectory, but
constraint weighting cannot affect which plan is executed when there is
only one candidate.

We introduce \emph{persistent particle planning} (P3), which retains a
weighted population of plans within and across replanning calls
(Figure~\ref{fig:persistent-planning}).
Within each planning call, P3 runs sequential Monte Carlo (SMC) over the
denoising chain
\citep{gordon1993novel,delmoral2004feynman,chopin2020introduction}, as
prior work does for conditional and reward-guided diffusion
\citep{trippe2023diffusion,wu2023practical,kim2025test,singhal2025general}.
Particles follow the model's own denoising steps, and the constraints act
only by reweighting and resampling them according to the violations of
their predicted plans.
Across calls, P3 applies the shift-and-re-noise warm start to every
particle and carries the weights forward. The population and its
selection history become part of the controller's state.
To separate keeping plans from selecting among them, we run the two
population samplers, best-of-$N$ and SMC, both with and without carrying
the population.
Our contributions are:
\begin{itemize}
    \item \textbf{Method.} P3 carries a constraint-weighted population
    of diffusion plans across control steps. It updates each plan by
    shifting and partially re-noising it instead of redrawing it, and it
    can rejuvenate the population with new plans. It needs no retraining,
    and a warm call runs only $t_0+1$ of the $K$ denoising steps
    (Section~\ref{sec:method}).
    \item \textbf{Theory.} We consider denoising and replanning as a single
    Feynman--Kac model and analyze it under an idealized repair. The
    re-noising depth controls how reliably a kept plan stays on its
    route. A rare but well-separated route can be kept with far fewer
    plans than sampling from scratch needs to rediscover it. For independent
    acceptable plans drawn from the model, one exact repair at a fixed
    observation is at least as likely, on average, to leave an acceptable
    plan as drawing a fresh population of the same size
    (Section~\ref{sec:theory}).
    \item \textbf{Evidence.} We evaluate on obstacle-avoidance and
    maze-navigation benchmarks whose constraints are added only at test
    time. Persistence helps most where a feasible route is hard to find
    from scratch. There, P3 reaches the goal without violations far more
    often than a population of the same size regenerated at every step.
    Elsewhere, it matches or exceeds regenerated populations. Controls
    separate the two effects. Keeping plans reduces route switching, and
    constraint weighting and resampling further raise success without
    violations on several tasks. On maze navigation, P3 also plans faster than
    regenerating the population or correcting a single plan by
    optimization (Section~\ref{sec:experiments}).
\end{itemize}

\section{Related Work}
\label{sec:related}

\paragraph{Diffusion models for planning and control.}
In decision making, diffusion models have been used as trajectory
generators whose samples are steered toward rewards or goals by guidance
\citep{janner2022planning,ajay2023conditional}, as visuomotor policies
that denoise action chunks and execute them in a receding-horizon loop
\citep{chi2023diffusion}, and as motion planners that combine the
learned prior with collision costs \citep{carvalho2023motion}.
These models can represent multiple ways of completing a task
\citep{shafiullah2022behavior,jia2024towards}.
Model-based variants derive the score from a dynamics model rather
than from data \citep{pan2024model}.
P3 keeps the pretrained model and its transitions fixed and acts only on
which samples survive and which are reused.

\paragraph{Constraints at inference time.}
Test-time constraints are usually imposed by modifying the sampling
path.
Gradient guidance adds the gradient of a constraint cost to the score
\citep{dhariwal2021diffusion,janner2022planning}, control-barrier
guidance enforces finite-time invariance of the denoising dynamics
\citep{xiao2025safediffuser}, and projection methods map each iterate
or its clean prediction onto the feasible set
\citep{christopher2024constrained,romer2025diffusion}; DiRecT argues
that projecting noisy iterates over-constrains the sampler and instead
enforces the constraint on the final sample through a receding-horizon
optimal-control formulation \citep{direct2026}.
A separate line changes the model so that its state space is the
constrained domain \citep{lou2023reflected,fishman2023diffusion}.
These approaches modify the sampling dynamics or the samples themselves.
P3 leaves the proposal untouched and reweights its reference path law.
Unlike exact projection, reweighting does not place excluded mass on a
previously null constraint boundary; its accumulated path potentials
can still change the relative masses of feasible routes
(Appendices~\ref{app:proof-target} and~\ref{app:proof-transport}).

\paragraph{Sequential Monte Carlo for diffusion sampling.}
SMC approximations to Feynman--Kac models have been used for conditional
and reward-guided diffusion sampling.
SMC over the denoising chain conditions on partial observations
\citep{trippe2023diffusion}; twisted proposals with telescoped
potentials give asymptotically exact conditional samplers
\citep{wu2023practical}; and filtering formulations address linear
inverse problems \citep{cardoso2024monte,dou2024diffusion}.
Related SMC methods are used for inference-time reward alignment
\citep{li2024derivative,kim2025test,singhal2025general,skreta2025feynman},
diverse batch sampling through joint potentials \citep{corso2024particle},
and search over initial noises \citep{ma2025inference}.
The choice of potential affects sampling efficiency, and twisting
functions can be learned
\citep{whiteley2014twisted,guarniero2017iterated,heng2020controlled,zhao2024probabilistic,wang2026trust}.
These methods sample one target with a population created from noise
and discarded afterward.
P3 uses a plain plug-in potential, to which learned twisting could be
added, and keeps the population, its weights and its lineages across
replanning calls; for fixed observations this is one Feynman--Kac model
spanning denoising and replanning (Section~\ref{sec:method}).

\paragraph{Replanning, warm starts, and populations in control.}
Sampling-based model predictive control carries information across
control steps. MPPI and cross-entropy planners shift and perturb the
previous control sequence \citep{williams2017information,chua2018deep},
SMC solves the open-loop problem at each step
\citep{kantas2009sequential,piche2019probabilistic}, and Stein
variational MPC keeps control-sequence particles between steps to
preserve modes \citep{lambert2020stein}; particle filters recover from
a lost track by injecting fresh particles \citep{thrun2005probabilistic}
and diversify by invariant moves \citep{gilks2001following}.
Diffuser warm-starts each plan by partially noising and denoising the
previous one \citep{janner2022planning,meng2022sdedit}, RDM reuses one
plan until the model's likelihood triggers regeneration
\citep{zhou2023adaptive}, and Streaming Diffusion Policy rolls a
partially denoised trajectory forward \citep{hoeg2025streaming}; all
keep one trajectory.
Diffusion-ES mutates high-reward diffusion samples by partial noising
and denoising, and can warm-start its population with solutions from the
previous control step \citep{yang2024diffusion}. Guided flow-matching
samples have also been refined by MPPI with warm starts
\citep{mizuta2025unified}.
P3 weights and resamples particles during denoising, then carries the
plans and their weights across control steps. It repairs this population
by partial noising and denoising and can add fresh particles.

\section{Persistent Particle Planning}
\label{sec:method}

\subsection{Problem formulation}

We consider a pretrained diffusion model over finite-horizon plans
$x=(x_0,\ldots,x_{H-1})$ with transitions $x_h=(a_h,o_h)$, where $a_h$
is an action and $o_h$ the predicted observation (e.g., positions and
velocities).
At control step $k$, the controller conditions generation on the current
observation $s_k$, selects a plan, executes its first action, and replans
after the next observation.

Constraints are supplied at inference time without retraining the
diffusion model.
Each constraint $j$ contributes a residual $c_j(x)\geq 0$, evaluated in
physical coordinates, which is the positive violation of an inequality
or the absolute residual of an equality.
The violation of a plan aggregates the residuals,
\begin{equation}
    c(x)=A\bigl(c_1(x),\ldots,c_J(x)\bigr),
    \label{eq:plan-cost}
\end{equation}
with any aggregate $A$ that vanishes exactly when every residual does,
such as their maximum or their sum; the residuals and aggregate used for
each task are given in Appendices~\ref{app:impl}
and~\ref{app:protocol}.

\subsection{Constraint-weighted diffusion sampling}
\label{sec:particle-sampling}

A planning call maintains $N$ noisy plans $\{y_t^{(i)}\}_{i=1}^{N}$.
At each reverse iteration, the diffusion model predicts a clean plan
\begin{equation}
    \hat{x}_t^{(i)}
    =
    D_\theta(y_t^{(i)},t,s_k),
\end{equation}
whose initial observation coordinates are pinned to $s_k$.
We evaluate a constraint potential on each clean prediction,
\begin{equation}
    \psi_t^{(i)}
    =
    \exp\!\left(
        -\beta_t\lambda_t c(\hat{x}_t^{(i)})
    \right),
    \label{eq:potential}
\end{equation}
whose strength $\beta_t\lambda_t$ increases as denoising progresses
(Appendix~\ref{app:impl}).
The potentials accumulate in the particle weights,
\begin{equation}
    \widetilde{w}_t^{(i)}
    =
    w_{\mathrm{in}}^{(i)}\psi_t^{(i)},
    \qquad
    W_t^{(i)}
    =
    \frac{\widetilde{w}_t^{(i)}}
         {\sum_j\widetilde{w}_t^{(j)}},
    \label{eq:weight-update}
\end{equation}
where $w_{\mathrm{in}}^{(i)}$ is the weight entering the iteration.
When the effective sample size $1/\sum_i (W_t^{(i)})^2$ falls below
$N/2$, the population is resampled according to $W_t$ and the weights
are reset to uniform; otherwise the normalized weights are retained.
The particles then propagate through the learned reverse diffusion
transition.

This procedure favors lineages whose predicted clean trajectories
satisfy the constraints during denoising.
Because the proposal is the original diffusion transition, no
proposal-density correction is required.
Accumulating intermediate potentials defines a path-dependent
selection objective that is not equivalent to weighting only the final
clean trajectory (Appendix~\ref{rem:marginal}, paragraph
``Current-plan marginal'').

\subsection{Reusing the population across control steps}
\label{sec:persistence}

The first planning call initializes particles independently from
Gaussian noise and runs the complete reverse process.
After executing an action, the controller advances every retained plan
by one transition:
\begin{equation}
    S(x)=
    \bigl(x_1,\ldots,x_{H-1},x_{H-1}\bigr).
    \label{eq:shift}
\end{equation}
Duplicating the final transition restores the planning horizon.
The next denoising call repairs this extended tail together with the
rest of the trajectory.

Each shifted plan is partially re-noised:
\begin{equation}
    y_{t_0}^{(i)}
    =
    \sqrt{\bar{\alpha}_{t_0}}\,
    S(x^{(i)})
    +
    \sigma_{\mathrm{init}}
    \sqrt{1-\bar{\alpha}_{t_0}}\,
    \epsilon^{(i)},
    \qquad
    \epsilon^{(i)}\sim\mathcal{N}(0,I).
    \label{eq:renoise}
\end{equation}
Here $x^{(i)}$ is the final plan of particle $i$ in the previous call,
$\bar\alpha_t$ is the noise schedule's cumulative signal coefficient, which
decreases from near $1$ at $t=0$ toward $0$ at the noisiest level, and
$\sigma_{\mathrm{init}}$ is the noise scale of the checkpoint's sampler
(Appendix~\ref{app:impl}).
The first observation of each re-noised plan is then replaced by the
new measurement $s_k$, as in every denoising step, and denoising
continues from level $t_0$.

The re-noising level $t_0$ controls both trajectory reuse and
computation. A small value retains more of the previous plan and
requires fewer denoising iterations, whereas a large value perturbs the
plan more and allows more iterations for adaptation.
A warm call runs $t=t_0,\ldots,0$, that is, $t_0+1$ of the $K$ reverse
iterations, on the original potential schedule.
Particle weights are retained across replanning boundaries in the main
configuration, so both the candidate trajectories and their selection
history persist; resetting them is evaluated as an ablation.

After denoising, the controller executes the first action of the
minimum-violation plan, $i^\star=\arg\min_i c(x^{(i)})$, and retains
the entire population for the next control step.
Algorithm~\ref{alg:p3} in Appendix~\ref{app:impl} gives the full
procedure.

\subsection{Theoretical analysis}
\label{sec:theory}

\paragraph{Particle target.}
\label{sec:fk}
Fix the observations and index denoising and shift-and-re-noise events
by $r=0,\ldots,R$. Write $Y_r$ for the noisy plan at event $r$,
with diffusion level $t(r)$, and $Y_{0:R}$ for its history.
$Y_0$ is the initial Gaussian-noise plan with the first observation
imposed. Let $\Pref$ be the law of this history without constraint
weighting or resampling.
With $\Lambda_t=\beta_t\lambda_t$, the path target is
\begin{equation}
    \Gamma_R(dY_{0:R})=\Pref(dY_{0:R})
        \prod_{r=0}^{R}e^{-\Lambda_{t(r)}
                            c(\hat x_{t(r)}(Y_r))},
    \qquad \Pi_R=\Gamma_R/\Gamma_R(1).
    \label{eq:fk}
\end{equation}
P3 is the SMC particle system for this model;
carried weights retain a lineage's selection history across calls
(Proposition~\ref{prop:target}, Appendix~\ref{app:proof-target}).
With $N=1$, constraint weighting cannot change the executed plan.

On cluttered, one of the obstacle-avoidance configurations of
Section~\ref{sec:experiments}, the fraction of trials that reach the goal
without violations is $0.35$ with $16$ fresh plans, $0.06$ with one
retained plan, and $0.99$ with $16$ retained plans
(Figure~\ref{fig:theory}(a)).
Under an idealized, exact repair (assumption (E) below), adding more
noise before denoising lowers the probability that a retained plan
remains on its route. Proofs and diagnostics are in Appendix~\ref{app:proofs}.

\begin{figure}[t]
    \centering
    \includegraphics[width=\linewidth]{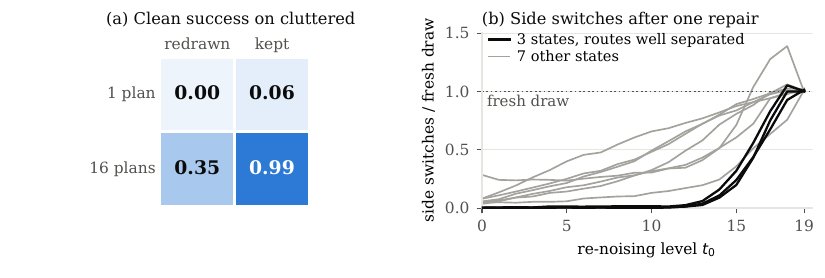}
    \vspace{-6pt}
    \caption{
        (a) Clean success on the cluttered configuration of
        Section~\ref{sec:experiments}, with one or $16$ plans, redrawn or
        kept (Table~\ref{tab:abl-pop}).
        (b) Side switches after one learned repair, relative to a fresh
        draw, at ten fixed Avoiding states where fresh plans end on both
        sides of the centerline; black: the three states whose two routes
        are well separated.
    }
    \label{fig:theory}
    \vspace{-8pt}
\end{figure}

Imagine drawing a route around an obstacle and then partly erasing it.
If enough remains to tell which side it takes, filling it back in tends
to preserve that choice. As more is erased, reconstruction becomes more
like choosing a route from scratch. Theorem~\ref{thm:memory} shows that adding
more noise makes repair less likely to preserve the original side and
more likely to switch sides.

\paragraph{Exact repair (E).}
At a fixed observation, let $p$ be the model's plan law.
Re-noise $x\sim p$ without shifting, then draw $x'\sim p(\cdot\mid y)$
independently of $x$ given $y$.
Rescaling eq.~\eqref{eq:renoise} gives
$y=\snr_{t_0}x+\epsilon$, $\epsilon\sim\mathcal{N}(0,I)$, where
$\snr_t=\sigma_{\mathrm{init}}^{-1}
\sqrt{\bar\alpha_t/(1-\bar\alpha_t)}$ decreases with $t$.

\begin{theorem}[Retention under re-noising]
\label{thm:memory}
Under (E), for any measurable $A$ with $\nu=p(A)\in(0,1)$, write
$V_A=\operatorname{Var}[p(A\mid y)]$.
(i) The retention and boundary-crossing probabilities are
\begin{equation}
    \Pr(x'\in A\mid x\in A)=\nu+\frac{V_A}{\nu},
    \qquad
    \Pr(\mathbf{1}_A(x')\ne\mathbf{1}_A(x))
    =2\nu(1-\nu)-2V_A.
    \label{eq:memory-main}
\end{equation}
(ii) $V_A$ decreases strictly with $t_0$ and tends to zero as
$\snr_{t_0}\to0$, when the repair becomes a fresh draw from $p$.
\end{theorem}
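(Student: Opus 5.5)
Part (i) is a conditional-independence computation. I write $q(y)=p(A\mid y)$, so that $\mathbb{E}[q(y)]=\nu$ and $V_A=\mathbb{E}[q(y)^2]-\nu^2$. Under (E), $x$ and $x'$ are conditionally i.i.d.\ given $y$, each with law $p(\cdot\mid y)$. Hence
\[
\Pr(x\in A,\,x'\in A)=\mathbb{E}\bigl[q(y)^2\bigr]=\nu^2+V_A .
\]
Dividing by $\Pr(x\in A)=\nu$ gives the retention formula. For the boundary-crossing probability I add the two off-diagonal events:
\[
\Pr(x\in A,x'\notin A)+\Pr(x\notin A,x'\in A)=2\,\mathbb{E}\bigl[q(1-q)\bigr]=2\nu-2(\nu^2+V_A),
\]
which equals $2\nu(1-\nu)-2V_A$.

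For the monotonicity in (ii), I write $y_\snr=\snr x+\epsilon$ and $q_\snr(y)=p(A\mid y_\snr=y)$. Since $\snr_t$ decreases with $t$, it suffices to show that $\operatorname{Var}[q_\snr(y_\snr)]$ increases strictly in $\snr$. The key step is a degradation coupling. For $\snr'<\snr$, set
\[
y_{\snr'}=\tfrac{\snr'}{\snr}\,y_\snr+\sqrt{1-(\snr'/\snr)^2}\,\epsilon',
\]
with $\epsilon'$ independent of everything else. This has the correct law $\snr' x+\text{noise}$, and it makes $x\to y_\snr\to y_{\snr'}$ a Markov chain. The tower property then gives $q_{\snr'}(y_{\snr'})=\mathbb{E}[q_\snr(y_\snr)\mid y_{\snr'}]$, and the law of total variance gives
\[
\operatorname{Var}q_\snr=\operatorname{Var}q_{\snr'}+\mathbb{E}\operatorname{Var}\bigl(q_\snr(y_\snr)\mid y_{\snr'}\bigr)\ \ge\ \operatorname{Var}q_{\snr'} .
\]

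The main obstacle is strictness, i.e.\ showing that the last expectation is positive. I will argue in two steps.
\begin{itemize}
\item \emph{$q_\snr$ is not a.e.\ constant.} If $q_\snr\equiv\nu$, then $y_\snr$ would be independent of $\mathbf{1}_A(x)$. The laws of $\snr x$ given $A$ and given $A^c$ would then have equal convolutions with a Gaussian. Because the Gaussian characteristic function never vanishes, the two laws themselves would be equal. This is impossible, since they are carried by the disjoint sets $A$ and $A^c$, each of positive mass because $\nu\in(0,1)$.
\item \emph{$q_\snr(y_\snr)$ is not a function of $y_{\snr'}$.} The joint density of $(y_\snr,y_{\snr'})$ is a $p$-mixture of nondegenerate Gaussians, so it is strictly positive everywhere. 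Hence, given $y_{\snr'}$, the conditional law of $y_\snr$ has positive density on all of $\mathbb{R}^d$. If $q_\snr(y_\snr)$ were $\sigma(y_{\snr'})$-measurable, it would be a.e.\ constant on $\mathbb{R}^d$ for a.e.\ value of $y_{\snr'}$. That contradicts the previous bullet.
\end{itemize}

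For the limit, I write the posterior as a ratio of integrals against the likelihood $\exp(\snr\langle y,x\rangle-\snr^2|x|^2/2)$. I will show $q_\snr(y_\snr)\to\nu$ in probability as $\snr\to0$. To do this, I first truncate $p$ to a large ball using tightness; on that ball the likelihood ratio tends to $1$ uniformly for $y_\snr=\snr x+\epsilon$ with $\epsilon$ in a high-probability set. Because $q_\snr\in[0,1]$ is bounded, convergence in probability gives $V_A\to0$. The same computation shows that $p(\cdot\mid y_\snr)\to p$, so the repair becomes a fresh draw from $p$. I expect this truncation step to need some care when $p$ has unbounded support, but it is routine.
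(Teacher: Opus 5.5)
Your proof is correct. For (i), and for the monotonicity and strictness in (ii), it is the paper's argument. Your coupling $y_{\snr'}=(\snr'/\snr)\,y_\snr+\sqrt{1-(\snr'/\snr)^2}\,\epsilon'$ is the paper's garbling $z'=z+c\epsilon'$, written in the $y=\snr x+\epsilon$ coordinates of (E). It is followed by the same total-variance decomposition, the same positive-density and Fubini step, and the same Fourier-uniqueness contradiction with $\nu\in(0,1)$. Strictness in $t_0$ also needs $\snr_t$ to be strictly decreasing, which the paper takes from the cosine schedule.

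The real difference is the fresh-draw limit. The paper never analyses the posterior. It uses contraction of total variation under the posterior kernel, together with convexity, to get $\TV(\Mideal(x,\cdot),p)\le\TV(G_{t_0}(x,\cdot),p_Y)\le\int[2\Phi(\snr\|x-\tilde x\|/2)-1]\,p(d\tilde x)$. It then applies dominated convergence for each $x$ and reads off $V_A=\int_A(\Mideal(x,A)-\nu)\,p(dx)\to0$. This route is explicit and needs no truncation.

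Your route proves a somewhat stronger fact: $p(\cdot\mid y)\to p$ uniformly for $y$ in bounded sets. As sketched, though, the truncation is incomplete. Tightness and uniform convergence of the likelihood on a ball do not stop the likelihood from amplifying the discarded tail mass. The line you need is $\exp(\snr\langle y,x\rangle-\snr^2\|x\|^2/2)=\exp(\|y\|^2/2-\|y-\snr x\|^2/2)\le e^{\|y\|^2/2}$, which holds uniformly in $x$. With this bound, dominated convergence in $x$ gives $\sup_{\|y\|\le M}\int|\exp(\snr\langle y,x\rangle-\snr^2\|x\|^2/2)-1|\,p(dx)\to0$ directly, and truncation becomes unnecessary.
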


The covariance identity in (i) is classical data augmentation covariance identity
\citep{liu1994covariance}.
Here, $V_A$ measures how much the conditional route probability
$p(A\mid y)$ varies across noisy plans; retention exceeds the
fresh-draw probability by $V_A/\nu$.
At the three fixed states with well-separated routes, repair changes
the plan's endpoint side at most $3\%$ as often as independent redrawing
for $t_0\le12$. At $t_0=19$, the switch rates are approximately equal
(Figure~\ref{fig:theory}(b); Appendix~\ref{app:kernel}).

\begin{corollary}[Rare-route retention]
\label{cor:memory-routes}
Under (E), let $p$ mix $\mathcal{N}(\mu_a,s^2I)$ and
$\mathcal{N}(\mu_b,s^2I)$ with weights $0<\pi\le1/2$ and $1-\pi$.
Here, $s$ is the standard deviation in each plan coordinate within
either component, assumed equal for the two components.
If the original plan is sampled from component $a$, the repaired plan
also belongs to component $a$ with probability at least $1/2$ if and only if
\begin{equation}
    \kappa:=\frac{\|\mu_a-\mu_b\|}
    {\sqrt{s^2+\snr_{t_0}^{-2}}}
    \ge\sqrt{2\ln((1-\pi)/\pi)}.
    \label{eq:route-threshold}
\end{equation}
Finding at least one plan from component $a$ with probability at least
$1/2$ requires
$N\ge\ln2/\ln(1/(1-\pi))$ independent fresh draws.
\end{corollary}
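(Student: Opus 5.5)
The plan is to reduce the corollary to a one-dimensional Gaussian computation of the posterior component log-odds given the noisy plan $y$. I work in the rescaled form of (E), $y=\snr x+\epsilon$ with $\snr=\snr_{t_0}$. Conditional on the latent component $z\in\{a,b\}$, we have $y\mid z\sim\mathcal{N}(\snr\mu_z,(\snr^2s^2+1)I)$. Hence the posterior log-odds is
\begin{equation*}
L(y)=\ln\frac{\pi}{1-\pi}+\frac{\|y-\snr\mu_b\|^2-\|y-\snr\mu_a\|^2}{2(\snr^2s^2+1)}.
\end{equation*}
This is affine in $y$. It depends on $y$ only through the projection onto $\Delta=\mu_a-\mu_b$. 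Since $\snr^2/(\snr^2s^2+1)=1/(s^2+\snr^{-2})$, the quadratic terms cancel, and the natural scale is exactly $\kappa^2=\|\Delta\|^2/(s^2+\snr^{-2})$.

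Under (E), the repaired plan $x'\sim p(\cdot\mid y)$ is a two-component Gaussian mixture. Its component is drawn with probability $\sigma(L(y))$ for $a$, where $\sigma$ is the logistic function. I will read ``the repaired plan belongs to component $a$'' as the event that the repair favours $a$, i.e.\ $L(y)\ge 0$. The condition ``with probability at least $1/2$'' is then taken over the re-noising of an original plan drawn from component $a$.

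Given $x$ from component $a$, the inner product $\langle y,\Delta\rangle$ is Gaussian with mean $\snr\langle\mu_a,\Delta\rangle$ and variance $(\snr^2s^2+1)\|\Delta\|^2$. Substituting, $L$ is Gaussian with mean $\ln(\pi/(1-\pi))+\kappa^2/2$ and variance $\kappa^2$; checking these two moments is routine algebra. Therefore
\begin{equation*}
\Pr(L\ge 0)=\Phi\bigl(\kappa/2-\ln((1-\pi)/\pi)/\kappa\bigr).
\end{equation*}
A Gaussian has median equal to its mean, so this probability is at least $1/2$ if and only if $\kappa^2/2\ge\ln((1-\pi)/\pi)$, which is eq.~\eqref{eq:route-threshold}. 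Because $\pi\le 1/2$, the right side is nonnegative and the square root is well defined. Equivalently, the threshold is exactly the condition that the posterior at the typical noisy plan $y=\snr\mu_a$ puts mass at least $1/2$ on $a$. I would state this as a check and connect it to Theorem~\ref{thm:memory}(i) with $A$ the component-$a$ event.

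The main obstacle is the interpretation of ``belongs to component $a$''. If it instead means the averaged probability $\mathbb{E}[\sigma(L)\mid z=a]$, the retention probability of Theorem~\ref{thm:memory}(i), then the threshold is no longer an exact ``if and only if''. The reason is that $\sigma$ applied to a Gaussian has mean different from $\sigma$ of the median. So I would commit to the median/MAP reading above and remark that under the averaged reading the same computation gives only a sufficient-type bound.

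The fresh-draw statement is elementary. With $N$ independent draws from $p$, $\Pr(\text{no draw from }a)=(1-\pi)^N$. Requiring $1-(1-\pi)^N\ge 1/2$ and taking logarithms, where $\ln(1-\pi)<0$, gives $N\ge \ln 2/\ln(1/(1-\pi))$. Finally I would note that this bound grows like $\ln 2/\pi$ as $\pi\to 0$, while the retention threshold grows only like $\sqrt{2\ln(1/\pi)}$.
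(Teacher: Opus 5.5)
Your Bayes computation is correct and matches the paper's. Given $z=a$, the posterior log-odds is $L(y)=\operatorname{logit}\pi+\kappa^2/2+\kappa\xi$ with $\xi\sim\mathcal{N}(0,1)$.

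The gap is the last step, and it has two parts. First, you prove the ``if and only if'' for the wrong event. Under (E) the repaired plan is a posterior draw $x'\sim p(\cdot\mid y)$. So ``the repaired plan belongs to component $a$'' is the event $z'=a$, where $z'$ is the label drawn jointly with $x'$. Its probability is $\Pr(z'=a\mid z=a)=\mathbb{E}[\sigma(L(y))\mid z=a]$. This is also what Theorem~\ref{thm:memory}(i) calls retention, so your planned link to that theorem only works under this reading. The event $\{L(y)\ge0\}$ you substitute is a MAP decision on $y$, not a property of the repaired plan.

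Second, your reason for rejecting the intended reading is wrong. The gap between $\mathbb{E}[\sigma(G)]$ and $\sigma$ of the median does not affect the comparison with $1/2$. Since $\sigma(-v)=1-\sigma(v)$ and $\xi$ is symmetric, $\mathbb{E}[\sigma(\kappa\xi)]=1/2$. Also, $u\mapsto\mathbb{E}[\sigma(u+\kappa\xi)]$ is strictly increasing because $\sigma$ is. Hence $\Pr(z'=a\mid z=a)\ge1/2$ if and only if $\operatorname{logit}\pi+\kappa^2/2\ge0$, which is exactly eq.~\eqref{eq:route-threshold}, with no loss.

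This symmetry argument is the missing idea, and it is how the paper finishes the proof. Replace your median step with it, and drop the claim that the averaged reading gives only a sufficient condition. Your value $\Phi(\kappa/2-\ln((1-\pi)/\pi)/\kappa)$ for the MAP event is correct and crosses $1/2$ at the same threshold. You can keep it as a side remark, but it is not the retention probability in the statement. The fresh-draw count is fine.
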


A rarely chosen route can still be easy to recognize after adding noise
if it is sufficiently different from the alternatives. Repair can then
preserve it even though a fresh draw would be unlikely to find it. At $\pi=0.01$, the $1/2$-retention threshold is $\kappa\approx3.03$,
whereas finding at least one plan from that component with the same
probability takes $69$ independent fresh draws.
These are mixture labels; geometric route regions agree with them
when the components are well separated.

\paragraph{Independent plans (S).}
Condition on the pre-planning history $\mathcal G_k$, which fixes the
current observation and held plans. Final plans are independent. Each
held plan is repaired with independent randomness, and fresh plans are
drawn independently from the unweighted plan law. There is no
within-call weighting or resampling.

\begin{proposition}[Kept and fresh plans]
\label{prop:keep}
Let $A_k=\{z:c_k(z)\le\delta\}$, $\delta\ge0$, so the minimum-violation plan is
acceptable iff the final population contains one.
Under (S), $m$ repairs of held plans $z_i$ have acceptance probabilities
$r_k(z_i)$, and $n$ fresh draws have rate $\nu_k$.
Then:
\begin{enumerate}
    \item[(i)] The probability of no acceptable plan is
    \begin{equation}
        (1-\nu_k)^n\prod_{i=1}^{m}(1-r_k(z_i)).
        \label{eq:keep-formula}
    \end{equation}
    Thus $n$ fresh draws restore an acceptable plan with probability
    at least $1-(1-\nu_k)^n$, whatever is held.
    \item[(ii)] For $N$ fresh plans with mean marginal acceptance
    $\nu_k$, the probability of finding one is at most $N\nu_k$,
    without independence; for independent draws with a common rate
    it is $1-(1-\nu_k)^N$.
    \item[(iii)] Under (E), with $\nu,V_A$ as in
    Theorem~\ref{thm:memory}, if the $m$ held plans
    are i.i.d.\ from $p(\cdot\mid A)$, with independent repairs and
    $n$ independent fresh draws from $p$, the one-step failure
    probability is $(1-\nu-V_A/\nu)^m(1-\nu)^n\le(1-\nu)^{m+n}$.
\end{enumerate}
\end{proposition}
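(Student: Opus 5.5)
The proof reduces all three parts to independence and one identity. Under (S), conditional on $\mathcal G_k$, the $m+n$ final plans are independent. The minimum-violation plan $i^\star$ satisfies $c_k(x^{(i^\star)})\le\delta$ iff $\min_i c_k(x^{(i)})\le\delta$, i.e.\ iff some final plan lies in $A_k$. This justifies the reduction stated before the proposition.

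For part (i), the event ``no acceptable plan'' is the intersection of the independent events $\{x^{(i)}\notin A_k\}$. Each repaired held plan misses $A_k$ with conditional probability $1-r_k(z_i)$, and each fresh draw misses it with probability $1-\nu_k$. Multiplying gives eq.~\eqref{eq:keep-formula}. Since every factor $1-r_k(z_i)$ lies in $[0,1]$, the failure probability is at most $(1-\nu_k)^n$ whatever the $z_i$ are. This yields the restoration bound $1-(1-\nu_k)^n$.

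For part (ii), the at-most-$N\nu_k$ bound is the union bound:
\[
\Pr\Bigl(\bigcup_i\{x^{(i)}\in A_k\}\Bigr)\le\sum_i\Pr(x^{(i)}\in A_k)=N\bar\nu_k,
\]
where $\bar\nu_k$ is the mean marginal rate. This uses no independence. The independent common-rate formula is the special case $m=0$ of part (i).

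For part (iii), condition on the held plans being i.i.d.\ from $p(\cdot\mid A)$ and average the product in part (i) over them. Independence makes the expectation of the product equal the product of expectations, giving $\bigl(1-\mathbb E[r(z)]\bigr)^m(1-\nu)^n$ with $z\sim p(\cdot\mid A)$. Under (E), $r(z)=\Pr(x'\in A\mid x=z)$, so $\mathbb E[r(z)]=\Pr(x'\in A\mid x\in A)=\nu+V_A/\nu$ by Theorem~\ref{thm:memory}(i). This gives the stated formula. Because $V_A\ge0$, we have $1-\nu-V_A/\nu\le1-\nu$. Nonnegativity of this quantity follows from its being a probability of non-retention. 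The comparison with $(1-\nu)^{m+n}$ then follows by monotonicity of $u\mapsto u^m$ on $[0,1]$.

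The only delicate step is the averaging in part (iii). We must check that the held plans, their repairs and the fresh draws are jointly independent, so the expectation factorizes. We must also check that the per-plan retention probability conditioned on $x\in A$ is exactly the quantity in Theorem~\ref{thm:memory}, with the repair applied without shifting at the same fixed observation as (E) requires. Everything else is a direct computation.
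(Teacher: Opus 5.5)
Your proposal is correct and follows the paper's proof essentially step for step: you multiply the independent failure probabilities for (i), use the union bound and the $m=0$ case for (ii), and for (iii) average over i.i.d.\ held plans so that Theorem~\ref{thm:memory}(i) gives per-repair acceptance $\nu+V_A/\nu\ge\nu$. The only difference is cosmetic: the paper gets the restoration clause in (i) from the mutual independence of the $n$ fresh draws alone, which is slightly weaker than the joint independence of all $m+n$ plans that your derivation from the product formula uses.
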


Under the assumptions in part (iii), keeping
and repairing acceptable plans is at least as likely to leave us with an
acceptable plan as drawing a fresh population of the same size.

\subsection{Rejuvenation}
\label{sec:rejuvenation}
Under (S), $n$ new plans can restore an acceptable plan whatever is held
(Proposition~\ref{prop:keep}(i)). This motivates rejuvenation.
At each boundary, P3 can replace its $n$ lowest-weight particles with new
plans, sampled from noise down to level $t_0+1$ by the same weighted
sampler. They enter at $t_0$ with the population's mean weight and cost
$n(K-1-t_0)$ extra evaluations per step.
Rejuvenation is a heuristic outside our analysis. Its new plans replace
weight-selected particles and are not independent draws
(Appendix~\ref{app:proofs}).

\section{Experiments}
\label{sec:experiments}

Section~\ref{sec:main-results} compares P3 with the baselines and reports
planning time.
Section~\ref{sec:ablations} separates keeping plans from selecting among
them and reports ablations.

\subsection{Setup}
\label{sec:setup}

\begin{figure}[t]
    \centering
    \includegraphics[width=\linewidth]{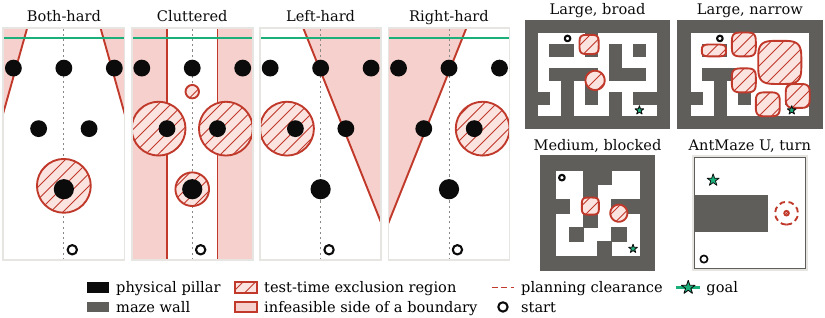}
    \vspace{-6pt}
    \caption{
        Tasks and test-time constraints. Left: D3IL Avoiding with its four
        added configurations; the dotted line is the centerline of the
        plan-side switch metric. Right: the large PointMaze with its broad and
        narrow presets (top), the medium PointMaze with our blocked preset, whose only
        clean route is the detour, and AntMaze U with an exclusion disc at the
        turn (bottom). None of the added constraints appears in the training
        data.
    }
    \label{fig:envs}
    \vspace{-8pt}
\end{figure}

We evaluate on four tasks (Figure~\ref{fig:envs}).
\textbf{Avoiding} is the D3IL obstacle-avoidance task
\citep{jia2024towards}, in which a robot end effector crosses a field of
six pillars to a goal line. The model plans over desired-position
increments, desired positions and measured positions.
Four configurations add static constraints (Figure~\ref{fig:envs}).
\emph{Both-hard}, \emph{left-hard} and \emph{right-hard} were introduced
by \citet{romer2025diffusion}, and \emph{cluttered}, a narrow corridor
with four exclusion regions, by \citet{jung2025joint}.
\textbf{PointMaze} \citep{delazcano2023gymnasium,fu2020d4rl} moves a
force-actuated point mass between fixed start and goal cells, and the
model plans over forces, positions and velocities.
On the large layout, the \emph{broad} and \emph{narrow} obstacle presets
clip the corridors of the unguided route; on the medium layout,
\emph{blocked} closes the direct corridor and leaves a detour that the
demonstrations contain but the unguided model rarely executes.
\textbf{AntMaze U} adds an exclusion disc at the turn of the U-shaped D4RL
AntMaze \citep{fu2020d4rl}, planned with a goal-conditioned state
diffusion model and an inverse-dynamics head.
Appendix~\ref{app:protocol} gives the shared protocol (horizons,
re-noising depths, margins and trial counts), and
Appendices~\ref{app:maze}, \ref{app:mb} and~\ref{app:antmaze} the maze
layouts, models and presets.


\paragraph{Baselines.}
\label{sec:baselines}
\emph{Unguided} draws one cold sample without constraint handling, and
\emph{Projection} projects one onto the constraints during denoising,
as in DPCC \citep{romer2025diffusion}.
\emph{RDM + projection} keeps one plan and repairs or regenerates it as
RDM \citep{zhou2023adaptive} does, projecting it onto the constraints
(Appendix~\ref{app:rdm}).
RDM as published does not use the added constraints.
\emph{DiRecT} \citep{direct2026} solves, in the second half of denoising,
a constrained optimization on the model's clean-plan estimate and
corrects the noisy plan by the result (Appendix~\ref{app:direct}).
\emph{SafeDiffuser} \citep{xiao2025safediffuser} applies a control-barrier
QP to the planned positions at every
denoising step (Appendix~\ref{app:safediffuser}).
\emph{Best-of-$N$} executes the minimum-violation plan among $N$ cold
samples, and \emph{SMC} (cold SMC) is the sampler of
Section~\ref{sec:particle-sampling} with a fresh population at every
step; P3 adds trajectory and weight reuse to it, and rejuvenation on
Avoiding. \emph{Persistent best-of-$N$} is P3 without weighting or
resampling (Appendix~\ref{app:pbon}).
Settings per task are in Appendix~\ref{app:protocol}.

\paragraph{Metrics.}
\emph{Clean success} is the fraction of trials that reach the goal
without any violation of the added constraints, graded on the unpadded
constraints; task success and the accumulated violation are in
Appendix~\ref{app:full-results}.
\emph{Plan switches} count how often the selected plan's endpoint
changes side of the workspace centerline on Avoiding, a route-consistency
proxy rather than a homotopy test.
Means and standard deviations are across model seeds (evaluation seeds
on AntMaze); bold marks the best mean.

\subsection{Main results}
\label{sec:main-results}

\begin{table}[t]
    \centering
    \footnotesize
    \caption{
        Clean success under test-time constraints, mean $\pm$ standard
        deviation across five model seeds (AntMaze: six evaluation seeds).
        Avoiding columns: both-hard, cluttered, left-hard, right-hard. Upper
        rows plan one sample per step, lower rows a population. Trial
        counts and settings: Appendix~\ref{app:protocol}; additional
        metrics: Appendix~\ref{app:full-results}.
    }
    \label{tab:results}
    \label{tab:avoiding-performance}
    \label{tab:maze-results}
    \resizebox{\linewidth}{!}{
\setlength{\tabcolsep}{2pt}
\begin{tabular}{@{}lcccccccc@{}}
\toprule
 & \multicolumn{4}{c}{Avoiding} & \multicolumn{2}{c}{Large maze} & Medium & AntMaze \\
\cmidrule(lr){2-5} \cmidrule(lr){6-7} \cmidrule(lr){8-8} \cmidrule(lr){9-9}
Method & Both & Clutt. & Left & Right & Broad & Narrow & Blocked & Turn \\
\midrule
Unguided & \result{0.00}{0.01} & \result{0.00}{0.00} & \result{0.09}{0.07} & \result{0.05}{0.08} & \result{0.06}{0.10} & \result{0.02}{0.05} & \result{0.00}{0.00} & \result{0.38}{0.11} \\
\shortstack[l]{RDM +\\projection} & \result{0.60}{0.11} & \result{0.09}{0.05} & \result{0.60}{0.10} & \result{0.36}{0.14} & \result{0.50}{0.11} & \result{0.72}{0.10} & \result{\mathbf{0.24}}{0.09} & \result{0.37}{0.15} \\
Projection & \result{0.85}{0.05} & \result{0.46}{0.26} & \result{0.83}{0.05} & \result{0.79}{0.10} & \result{0.66}{0.22} & \result{0.91}{0.15} & \result{0.11}{0.09} & \result{0.40}{0.04} \\
DiRecT & \result{0.92}{0.06} & \result{0.28}{0.12} & \result{0.84}{0.03} & \result{0.88}{0.10} & \result{\mathbf{1.00}}{0.00} & \result{\mathbf{0.98}}{0.04} & \result{0.12}{0.11} & \result{0.48}{0.09} \\
SafeDiffuser & \result{0.44}{0.17} & \result{0.16}{0.12} & \result{0.66}{0.31} & \result{0.34}{0.30} & \result{0.30}{0.36} & \result{0.04}{0.08} & \result{0.00}{0.01} & \result{0.39}{0.14} \\
\midrule
Best-of-$N$ & \result{0.92}{0.06} & \result{0.34}{0.35} & \result{0.88}{0.07} & \result{0.85}{0.19} & \result{0.74}{0.29} & \result{0.51}{0.38} & \result{0.01}{0.01} & \result{0.43}{0.11} \\
SMC & \result{0.97}{0.03} & \result{0.48}{0.41} & \result{0.91}{0.06} & \result{0.97}{0.04} & \result{0.75}{0.36} & \result{0.51}{0.39} & \result{0.00}{0.00} & \result{0.53}{0.11} \\
\shortstack[l]{Persistent\\best-of-$N$} & \result{0.93}{0.03} & \result{0.87}{0.14} & \result{0.86}{0.09} & \result{0.92}{0.13} & \result{0.75}{0.18} & \result{0.56}{0.36} & \result{0.20}{0.07} & \result{0.37}{0.06} \\
P3 (ours) & \result{\mathbf{0.99}}{0.01} & \result{\mathbf{0.99}}{0.01} & \result{\mathbf{0.94}}{0.04} & \result{\mathbf{0.98}}{0.04} & \result{0.89}{0.09} & \result{0.85}{0.21} & \result{0.21}{0.13} & \result{\mathbf{0.62}}{0.10} \\
\bottomrule
\end{tabular}
}
    \vspace{-8pt}
\end{table}

P3 attains the highest mean clean success in five of the eight columns
of Table~\ref{tab:results}. Its gains over populations regenerated at
every step are largest on cluttered, narrow and the medium maze, where a
clean route is hard to find from scratch.

\paragraph{Avoiding.}
On cluttered, clean success is $0.99$ for P3 and $0.48$ for SMC.
Single-plan correction methods reach only $0.46$ (Projection), $0.28$
(DiRecT) and $0.16$ (SafeDiffuser) there. An $8$-step plan does
not span its exclusion regions, so the correction can put consecutive
plans on different sides of a region.
Projection switches plan side $12.4$ times per episode there, against
$7.7$ for P3 (Appendix~\ref{app:full-results},
Table~\ref{tab:persistence-effects} and Figure~\ref{fig:route}).
On both-hard, left-hard and right-hard, P3 and SMC with the same $N$ and
margin differ in mean clean success by at most $0.03$.

At equal margins, P3 is cleaner than Projection on all four
configurations, and a larger margin helps Projection only on both-hard
(Appendix~\ref{app:full-results}, Table~\ref{tab:margin-sensitivity}).
RDM + projection is below Projection on all four Avoiding cases.
On cluttered, the projection step fails more often when RDM tries to
fix its old plan than when it starts with a new one
(Appendix~\ref{app:rdm}).

\paragraph{Mazes.}
On narrow, P3 is clean on $0.85$ of trials, against about half for
fresh populations. Projection reaches $0.91$ there.
On the medium maze, RDM + projection, which repairs its projected plan
at every step, P3 and persistent best-of-$N$ carry their plans across
control steps and complete the detour cleanly most often ($0.24$,
$0.21$ and $0.20$, against at most $0.12$).
With $N=4$, P3 has the highest clean success on AntMaze ($0.62$, against
$0.53$ for SMC and $0.43$ for best-of-$N$).
P3 enters the disc in $29\%$ of episodes, compared with $38\%$ for SMC.
With $N=32$ the two are close ($0.61$ and $0.60$;
Appendix~\ref{app:antmaze}).

\paragraph{Planning time.}
On one RTX 4070 Ti GPU, all particles of a population are denoised in one
batch, so a planning call takes about as long as sampling one plan with
the same number of denoising iterations (Appendix~\ref{app:timing}).
A warm P3 call runs only $t_0+1$ of the $K$ iterations.
P3 takes $65$, $93$, $24$ and $186$\,ms per control step on cluttered
(without rejuvenation), narrow, the medium maze and AntMaze ($N=32$),
against $112$, $118$, $86$ and $224$\,ms for SMC.
On all four tasks it is also faster than sampling one unguided plan,
and faster than Projection, DiRecT and SafeDiffuser.
On narrow, for instance, DiRecT takes $467$\,ms and Projection $27$\,s
per step.

\subsection{Ablations}
\label{sec:ablations}

Table~\ref{tab:results} runs best-of-$N$ and SMC with and without
persistence, which separates keeping plans from selecting among them.
Keeping plans helps on its own. On cluttered, persistent best-of-$N$ is
clean on $0.87$ of trials, against $0.34$ for best-of-$N$.
Selecting among kept plans adds to this. P3 reaches $0.99$ on cluttered
and $0.85$ on narrow, against $0.87$ and $0.56$ for persistent
best-of-$N$, and the two are similar on the medium maze ($0.21$ and
$0.20$; Appendix~\ref{app:pbon}).
At the Table~\ref{tab:results} setting, persistence also lowers plan
switching, by up to $2.5$ times (Table~\ref{tab:persistence-effects}).

Without rejuvenation, P3 is clean on $0.90$ of cluttered trials, still
far above SMC, but on both-hard it falls to $0.88$, below SMC's $0.97$
(Table~\ref{tab:full-avoiding}).
The ablations in Figure~\ref{fig:ablations-row} run on cluttered with
margin $0.01$ and no rejuvenation.
With $8$ particles, P3 is cleaner than cold SMC with $32$ ($0.92$
against $0.46$) at about a seventh of the denoising cost
(Table~\ref{tab:abl-pop}).
Clean success is highest at intermediate depths ($0.97$--$0.99$ for
$t_0=4$--$12$) and falls to $0.40$ at $t_0=19$, where the re-noised
input depends only weakly on the previous plan and switching returns
to the cold-SMC level
(Theorem~\ref{thm:memory}).
At that depth, with weights carried across steps or reset to uniform,
clean success stays near cold SMC ($0.40$ and $0.43$ against $0.35$). Retaining weights alone does not reproduce
the gains at intermediate depths (Appendix~\ref{app:abl-weights}).

\begin{figure}[t]
    \centering
    \includegraphics[width=\linewidth]{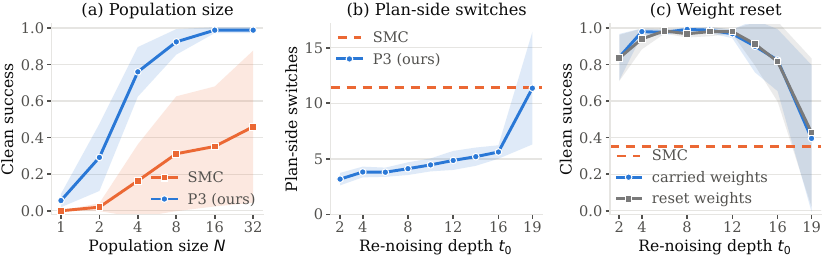}
    \vspace{-6pt}
    \caption{
        Ablations on cluttered, mean $\pm$ one standard deviation over five
        model seeds. (a) Clean success against the population size. (b)
        Plan-side switches and (c) clean success with carried or reset
        weights against the re-noising depth $t_0$, at $N=16$. Dashed: SMC.
        Details: Appendix~\ref{app:ablations}.
    }
    \label{fig:ablations-row}
    \vspace{-8pt}
\end{figure}

\vspace{-5pt}
\section{Discussion and Limitations}
\label{sec:discussion}
\label{sec:limitations}
\vspace{-3pt}

Among population methods, persistence helps most when a clean route is
rare under the model, as on cluttered and the narrow maze. A kept
population can hold such a route once found, whereas a population
regenerated at every step has to find it again.
It adds little where clean routes are common and the population is
large. On AntMaze the unguided model alone is clean on $0.38$ of trials,
and with $N=32$ mean clean success is $0.61$ for P3 and $0.60$ for SMC.
Each replan there shifts the kept plans by five steps at once
(Appendix~\ref{app:antmaze}).
When a single plan already spans the obstacles, as on the large maze
with $64$-step plans, correcting that plan by constrained optimization
also works well.

Our test-time constraints are static regions and boundaries on planar
positions.
The exact-repair and equal spherical Gaussian assumptions of
Corollary~\ref{cor:memory-routes} do not hold exactly for our learned
planner, whose repair includes shifting and learned denoising and whose
resampling creates shared ancestry (Appendix~\ref{app:kernel}).
The corollary's threshold fails to predict retention at some higher
noise levels, even at a well-separated Avoiding state, and we have not
validated it for rare routes or on the mazes.
Proposition~\ref{prop:keep}(iii) compares one-step probabilities of finding
an acceptable plan, averaged over the kept plans; it gives no pointwise,
long-run, or executed-path safety guarantee.

\vspace{-5pt}
\section{Conclusion}
\label{sec:conclusion}
\vspace{-3pt}

P3 keeps a weighted population of diffusion plans across control steps.
At every step, it shifts and partially re-noises the kept plans and
selects among them by constraint weighting and resampling, without
retraining. Under an idealized repair, we proved that the re-noising
depth controls how reliably a kept plan stays on its route. Keeping a
rare, well-separated route also takes far fewer plans than rediscovering
it. In our experiments, keeping plans helped most where clean routes are
rare, selecting among them added to this, and reuse made each replan
cheaper. A diffusion planner need not start from scratch at every step.
The plans it has already found are worth keeping.


\newpage
\section*{Reproducibility Statement}
Algorithm~\ref{alg:p3} and Appendix~\ref{app:impl} specify P3 fully,
including the potential schedule, the resampling rule, the noise-scale
convention and action selection. Appendix~\ref{app:impl} also describes
our implementations of RDM + projection, DiRecT and SafeDiffuser; the
latter two port the official code to our checkpoints.
Appendix~\ref{app:protocol} gives, for every task, the horizon, number of
denoising steps, population size, re-noising depth, margins, trial counts,
episode limits and grading rule, and Appendices~\ref{app:maze},
\ref{app:mb} and~\ref{app:antmaze} describe the task-specific models,
training data and constraint presets.
Avoiding uses the public D3IL demonstrations and AntMaze the public D4RL
\texttt{antmaze-umaze-v1} data. The PointMaze demonstrations come from the
scripted waypoint controller described in Appendices~\ref{app:maze}
and~\ref{app:mb}.
Main results use five independently trained models per task, or one
checkpoint with six blocks of evaluation seeds on AntMaze. Each trial's
start state is fixed by its index, so all methods face the same trials.
The planners' random draws differ across GPU models, so bit-exact reruns
require the same device.
Each theoretical result states its assumptions, and
Appendix~\ref{app:proofs} gives complete proofs.
Appendix~\ref{app:timing} reports the hardware and protocol of the timing
measurements.
Code, pretrained models and the scripts that produce the tables and
figures are available at the anonymized repository linked in the abstract.

\section*{AI Use Statement}
Generative AI tools were used in this work to give feedback on experimental
design, assist in analyzing and interpreting experimental results, help develop
parts of the theoretical framework, suggest some mathematical claims and key
steps in their proofs, assist in writing proofs, implement and debug software,
produce figures, search for and summarize related literature, format references,
draft sections of the manuscript, and edit the text for clarity and structure.

Other theoretical ideas and mathematical claims were developed by the authors.
We have not used generative AI tools to propose or refine research hypotheses,
translate text, or generate, clean, or reformat datasets. Qualitative or
thematic data analysis is not applicable to this work.

The authors reviewed and verified all AI-assisted technical content: mathematical
proofs were checked line-by-line by hand, software and implementations were
verified through test suites, and literature citations were checked against
original sources. The authors take full responsibility for the final content
of this work.

\bibliographystyle{iclr2027_conference}
\bibliography{references}

@book{delmoral2004feynman,
  title     = {Feynman--{K}ac Formulae: Genealogical and Interacting Particle Systems with Applications},
  author    = {Del Moral, Pierre},
  year      = {2004},
  publisher = {Springer},
  address   = {New York}
}

@book{chopin2020introduction,
  title     = {An Introduction to Sequential {M}onte {C}arlo},
  author    = {Chopin, Nicolas and Papaspiliopoulos, Omiros},
  year      = {2020},
  publisher = {Springer},
  address   = {Cham}
}

@article{delmoral2012adaptive,
  title   = {On adaptive resampling strategies for sequential {M}onte {C}arlo methods},
  author  = {Del Moral, Pierre and Doucet, Arnaud and Jasra, Ajay},
  journal = {Bernoulli},
  volume  = {18},
  number  = {1},
  pages   = {252--278},
  year    = {2012}
}

@article{gerber2019negative,
  title   = {Negative association, ordering and convergence of resampling methods},
  author  = {Gerber, Mathieu and Chopin, Nicolas and Whiteley, Nick},
  journal = {The Annals of Statistics},
  volume  = {47},
  number  = {4},
  pages   = {2236--2260},
  year    = {2019}
}

@inproceedings{wu2023practical,
  title     = {Practical and asymptotically exact conditional sampling in diffusion models},
  author    = {Wu, Luhuan and Trippe, Brian L. and Naesseth, Christian A. and Blei, David M. and Cunningham, John P.},
  booktitle = {Advances in Neural Information Processing Systems},
  volume    = {36},
  year      = {2023}
}

@inproceedings{singhal2025general,
  title     = {A general framework for inference-time scaling and steering of diffusion models},
  author    = {Singhal, Raghav and Horvitz, Zachary and Teehan, Ryan and Ren, Mengye and Yu, Zhou and McKeown, Kathleen and Ranganath, Rajesh},
  booktitle = {International Conference on Machine Learning},
  year      = {2025}
}

@inproceedings{janner2022planning,
  title     = {Planning with diffusion for flexible behavior synthesis},
  author    = {Janner, Michael and Du, Yilun and Tenenbaum, Joshua B. and Levine, Sergey},
  booktitle = {International Conference on Machine Learning},
  year      = {2022}
}

@inproceedings{chi2023diffusion,
  title     = {Diffusion policy: Visuomotor policy learning via action diffusion},
  author    = {Chi, Cheng and Feng, Siyuan and Du, Yilun and Xu, Zhenjia and Cousineau, Eric and Burchfiel, Benjamin and Song, Shuran},
  booktitle = {Robotics: Science and Systems},
  year      = {2023}
}

@inproceedings{zhou2023adaptive,
  title     = {Adaptive online replanning with diffusion models},
  author    = {Zhou, Siyuan and Du, Yilun and Zhang, Shun and Xu, Mengdi and Shen, Yikang and Xiao, Wei and Yeung, Dit-Yan and Gan, Chuang},
  booktitle = {Advances in Neural Information Processing Systems},
  volume    = {36},
  year      = {2023}
}

@inproceedings{romer2025diffusion,
  title     = {Diffusion predictive control with constraints},
  author    = {R{\"o}mer, Ralf and von Rohr, Alexander and Schoellig, Angela P.},
  booktitle = {Learning for Dynamics and Control Conference},
  year      = {2025}
}

@inproceedings{jia2024towards,
  title     = {Towards diverse behaviors: A benchmark for imitation learning with human demonstrations},
  author    = {Jia, Xiaogang and Blessing, Denis and Jiang, Xinkai and Reuss, Moritz and Donat, Atalay and Lioutikov, Rudolf and Neumann, Gerhard},
  booktitle = {International Conference on Learning Representations},
  year      = {2024}
}

@article{fu2020d4rl,
  title   = {{D4RL}: Datasets for deep data-driven reinforcement learning},
  author  = {Fu, Justin and Kumar, Aviral and Nachum, Ofir and Tucker, George and Levine, Sergey},
  journal = {arXiv preprint arXiv:2004.07219},
  year    = {2020}
}

@misc{delazcano2023gymnasium,
  title        = {Gymnasium Robotics},
  author       = {de Lazcano, Rodrigo and Andreas, Kallinteris and Tai, Jun Jet and Lee, Seungjae Ryan and Terry, Jordan},
  year         = {2023},
  howpublished = {\url{http://github.com/Farama-Foundation/Gymnasium-Robotics}}
}

@article{direct2026,
  title   = {{DiRecT}: Safe diffusion-based planning via receding-horizon denoising},
  author  = {Giaretta, Paolo and Li, Zeyang and Azizan, Navid},
  journal = {arXiv preprint arXiv:2606.15359},
  year    = {2026}
}

@article{guarniero2017iterated,
  title   = {The iterated auxiliary particle filter},
  author  = {Guarniero, Pieralberto and Johansen, Adam M. and Lee, Anthony},
  journal = {Journal of the American Statistical Association},
  volume  = {112},
  number  = {520},
  pages   = {1636--1647},
  year    = {2017}
}

@article{heng2020controlled,
  title   = {Controlled sequential {M}onte {C}arlo},
  author  = {Heng, Jeremy and Bishop, Adrian N. and Deligiannidis, George and Doucet, Arnaud},
  journal = {The Annals of Statistics},
  volume  = {48},
  number  = {5},
  pages   = {2904--2929},
  year    = {2020}
}

@article{whiteley2014twisted,
  title   = {Twisted particle filters},
  author  = {Whiteley, Nick and Lee, Anthony},
  journal = {The Annals of Statistics},
  volume  = {42},
  number  = {1},
  pages   = {115--141},
  year    = {2014}
}

@inproceedings{zhao2024probabilistic,
  title     = {Probabilistic inference in language models via twisted sequential {M}onte {C}arlo},
  author    = {Zhao, Stephen and Brekelmans, Rob and Makhzani, Alireza and Grosse, Roger},
  booktitle = {International Conference on Machine Learning},
  year      = {2024}
}

@article{wang2026trust,
  title   = {Inference-time alignment of diffusion models via trust-region iterative twisted sequential {M}onte {C}arlo},
  author  = {Wang, Weixin and Yang, Yu and Deng, Wei and Xu, Pan},
  journal = {arXiv preprint arXiv:2605.25123},
  year    = {2026}
}

@inproceedings{hoeg2025streaming,
  title     = {Streaming diffusion policy: Fast policy synthesis with variable noise diffusion models},
  author    = {H{\o}eg, Sigmund H. and Du, Yilun and Egeland, Olav},
  booktitle = {IEEE International Conference on Robotics and Automation},
  year      = {2025}
}

@inproceedings{trippe2023diffusion,
  title     = {Diffusion probabilistic modeling of protein backbones in {3D} for the motif-scaffolding problem},
  author    = {Trippe, Brian L. and Yim, Jason and Tischer, Doug and Baker, David and Broderick, Tamara and Barzilay, Regina and Jaakkola, Tommi},
  booktitle = {International Conference on Learning Representations},
  year      = {2023}
}

@inproceedings{cardoso2024monte,
  title     = {Monte {C}arlo guided denoising diffusion models for {B}ayesian linear inverse problems},
  author    = {Cardoso, Gabriel and Idrissi, Yazid Janati El and Le Corff, Sylvain and Moulines, Eric},
  booktitle = {International Conference on Learning Representations},
  year      = {2024}
}

@inproceedings{dou2024diffusion,
  title     = {Diffusion posterior sampling for linear inverse problem solving: A filtering perspective},
  author    = {Dou, Zehao and Song, Yang},
  booktitle = {International Conference on Learning Representations},
  year      = {2024}
}

@inproceedings{kim2025test,
  title     = {Test-time alignment of diffusion models without reward over-optimization},
  author    = {Kim, Sunwoo and Kim, Minkyu and Park, Dongmin},
  booktitle = {International Conference on Learning Representations},
  year      = {2025}
}

@article{li2024derivative,
  title   = {Derivative-free guidance in continuous and discrete diffusion models with soft value-based decoding},
  author  = {Li, Xiner and Zhao, Yulai and Wang, Chenyu and Scalia, Gabriele and Eraslan, Gokcen and Nair, Surag and Biancalani, Tommaso and Ji, Shuiwang and Regev, Aviv and Levine, Sergey and Uehara, Masatoshi},
  journal = {arXiv preprint arXiv:2408.08252},
  year    = {2024}
}

@inproceedings{skreta2025feynman,
  title     = {Feynman--{K}ac correctors in diffusion: Annealing, guidance, and product of experts},
  author    = {Skreta, Marta and Akhound-Sadegh, Tara and Ohanesian, Viktor and Bondesan, Roberto and Aspuru-Guzik, Al{\'a}n and Doucet, Arnaud and Brekelmans, Rob and Tong, Alexander and Neklyudov, Kirill},
  booktitle = {International Conference on Machine Learning},
  year      = {2025}
}

@article{ma2025inference,
  title   = {Inference-time scaling for diffusion models beyond scaling denoising steps},
  author  = {Ma, Nanye and Tong, Shangyuan and Jia, Haolin and Hu, Hexiang and Su, Yu-Chuan and Zhang, Mingda and Yang, Xuan and Li, Yandong and Jaakkola, Tommi and Jia, Xuhui and Xie, Saining},
  journal = {arXiv preprint arXiv:2501.09732},
  year    = {2025}
}

@inproceedings{corso2024particle,
  title     = {Particle guidance: Non-{I.I.D.} diverse sampling with diffusion models},
  author    = {Corso, Gabriele and Xu, Yilun and De Bortoli, Valentin and Barzilay, Regina and Jaakkola, Tommi},
  booktitle = {International Conference on Learning Representations},
  year      = {2024}
}

@inproceedings{ajay2023conditional,
  title     = {Is conditional generative modeling all you need for decision-making?},
  author    = {Ajay, Anurag and Du, Yilun and Gupta, Abhi and Tenenbaum, Joshua B. and Jaakkola, Tommi and Agrawal, Pulkit},
  booktitle = {International Conference on Learning Representations},
  year      = {2023}
}

@inproceedings{carvalho2023motion,
  title     = {Motion planning diffusion: Learning and planning of robot motions with diffusion models},
  author    = {Carvalho, Jo{\~a}o and Le, An T. and Baierl, Mark and Koert, Dorothea and Peters, Jan},
  booktitle = {IEEE/RSJ International Conference on Intelligent Robots and Systems},
  year      = {2023}
}

@inproceedings{pan2024model,
  title     = {Model-based diffusion for trajectory optimization},
  author    = {Pan, Chaoyi and Yi, Zeji and Shi, Guanya and Qu, Guannan},
  booktitle = {Advances in Neural Information Processing Systems},
  volume    = {37},
  year      = {2024}
}

@inproceedings{xiao2025safediffuser,
  title     = {{SafeDiffuser}: Safe planning with diffusion probabilistic models},
  author    = {Xiao, Wei and Wang, Tsun-Hsuan and Gan, Chuang and Hasani, Ramin and Lechner, Mathias and Rus, Daniela},
  booktitle = {International Conference on Learning Representations},
  year      = {2025}
}

@inproceedings{christopher2024constrained,
  title     = {Constrained synthesis with projected diffusion models},
  author    = {Christopher, Jacob K. and Baek, Stephen and Fioretto, Ferdinando},
  booktitle = {Advances in Neural Information Processing Systems},
  volume    = {37},
  year      = {2024}
}

@inproceedings{lou2023reflected,
  title     = {Reflected diffusion models},
  author    = {Lou, Aaron and Ermon, Stefano},
  booktitle = {International Conference on Machine Learning},
  year      = {2023}
}

@article{fishman2023diffusion,
  title   = {Diffusion models for constrained domains},
  author  = {Fishman, Nic and Klarner, Leo and De Bortoli, Valentin and Mathieu, Emile and Hutchinson, Michael},
  journal = {Transactions on Machine Learning Research},
  year    = {2023}
}

@inproceedings{dhariwal2021diffusion,
  title     = {Diffusion models beat {GANs} on image synthesis},
  author    = {Dhariwal, Prafulla and Nichol, Alexander},
  booktitle = {Advances in Neural Information Processing Systems},
  volume    = {34},
  year      = {2021}
}

@inproceedings{williams2017information,
  title     = {Information theoretic {MPC} for model-based reinforcement learning},
  author    = {Williams, Grady and Wagener, Nolan and Goldfain, Brian and Drews, Paul and Rehg, James M. and Boots, Byron and Theodorou, Evangelos A.},
  booktitle = {IEEE International Conference on Robotics and Automation},
  year      = {2017}
}

@inproceedings{lambert2020stein,
  title     = {Stein variational model predictive control},
  author    = {Lambert, Alexander and Fishman, Adam and Fox, Dieter and Boots, Byron and Ramos, Fabio},
  booktitle = {Conference on Robot Learning},
  year      = {2020}
}

@inproceedings{chua2018deep,
  title     = {Deep reinforcement learning in a handful of trials using probabilistic dynamics models},
  author    = {Chua, Kurtland and Calandra, Roberto and McAllister, Rowan and Levine, Sergey},
  booktitle = {Advances in Neural Information Processing Systems},
  volume    = {31},
  year      = {2018}
}

@incollection{kantas2009sequential,
  title     = {Sequential {M}onte {C}arlo for model predictive control},
  author    = {Kantas, Nikolas and Maciejowski, Jan M. and Lecchini-Visintini, Andrea},
  booktitle = {Nonlinear Model Predictive Control},
  series    = {Lecture Notes in Control and Information Sciences},
  volume    = {384},
  publisher = {Springer},
  year      = {2009}
}

@inproceedings{shafiullah2022behavior,
  title     = {Behavior transformers: Cloning $k$ modes with one stone},
  author    = {Shafiullah, Nur Muhammad and Cui, Zichen and Altanzaya, Ariuntuya and Pinto, Lerrel},
  booktitle = {Advances in Neural Information Processing Systems},
  volume    = {35},
  year      = {2022}
}

@article{mayne2000constrained,
  title   = {Constrained model predictive control: Stability and optimality},
  author  = {Mayne, David Q. and Rawlings, James B. and Rao, Christopher V. and Scokaert, Pierre O. M.},
  journal = {Automatica},
  volume  = {36},
  number  = {6},
  pages   = {789--814},
  year    = {2000}
}

@article{gordon1993novel,
  title   = {Novel approach to nonlinear/non-{G}aussian {B}ayesian state estimation},
  author  = {Gordon, Neil J. and Salmond, David J. and Smith, Adrian F. M.},
  journal = {IEE Proceedings F (Radar and Signal Processing)},
  volume  = {140},
  number  = {2},
  pages   = {107--113},
  year    = {1993}
}

@inproceedings{zhao2023learning,
  title     = {Learning fine-grained bimanual manipulation with low-cost hardware},
  author    = {Zhao, Tony Z. and Kumar, Vikash and Levine, Sergey and Finn, Chelsea},
  booktitle = {Robotics: Science and Systems},
  year      = {2023}
}

@inproceedings{black2025realtime,
  title     = {Real-time execution of action chunking flow policies},
  author    = {Black, Kevin and Galliker, Manuel and Levine, Sergey},
  booktitle = {Advances in Neural Information Processing Systems},
  year      = {2025}
}

@inproceedings{liu2025bidirectional,
  title     = {Bidirectional decoding: Improving action chunking via guided test-time sampling},
  author    = {Liu, Yuejiang and Hamid, Jubayer Ibn and Xie, Annie and Lee, Yoonho and Du, Max and Finn, Chelsea},
  booktitle = {International Conference on Learning Representations},
  year      = {2025}
}

@article{liu1994covariance,
  title   = {Covariance structure of the {G}ibbs sampler with applications to the comparisons of estimators and augmentation schemes},
  author  = {Liu, Jun S. and Wong, Wing Hung and Kong, Augustine},
  journal = {Biometrika},
  volume  = {81},
  number  = {1},
  pages   = {27--40},
  year    = {1994}
}

@inproceedings{meng2022sdedit,
  title     = {{SDEdit}: Guided Image Synthesis and Editing with Stochastic Differential Equations},
  author    = {Meng, Chenlin and He, Yutong and Song, Yang and Song, Jiaming and Wu, Jiajun and Zhu, Jun-Yan and Ermon, Stefano},
  booktitle = {International Conference on Learning Representations},
  year      = {2022}
}

@article{gilks2001following,
  title   = {Following a moving target---{M}onte {C}arlo inference for dynamic {B}ayesian models},
  author  = {Gilks, Walter R. and Berzuini, Carlo},
  journal = {Journal of the Royal Statistical Society: Series B (Statistical Methodology)},
  volume  = {63},
  number  = {1},
  pages   = {127--146},
  year    = {2001}
}

@book{thrun2005probabilistic,
  title     = {Probabilistic Robotics},
  author    = {Thrun, Sebastian and Burgard, Wolfram and Fox, Dieter},
  publisher = {MIT Press},
  year      = {2005}
}

@article{blackwell1953equivalent,
  title   = {Equivalent comparisons of experiments},
  author  = {Blackwell, David},
  journal = {The Annals of Mathematical Statistics},
  volume  = {24},
  number  = {2},
  pages   = {265--272},
  year    = {1953}
}

@inproceedings{yang2024diffusion,
  title     = {Diffusion-{ES}: Gradient-free Planning with Diffusion for Autonomous and Instruction-guided Driving},
  author    = {Yang, Brian and Su, Huangyuan and Gkanatsios, Nikolaos and Ke, Tsung-Wei and Jain, Ayush and Schneider, Jeff and Fragkiadaki, Katerina},
  booktitle = {Proceedings of the IEEE/CVF Conference on Computer Vision and Pattern Recognition (CVPR)},
  year      = {2024}
}

@inproceedings{piche2019probabilistic,
  title     = {Probabilistic Planning with Sequential {M}onte {C}arlo Methods},
  author    = {Pich{\'e}, Alexandre and Thomas, Valentin and Ibrahim, Cyril and Bengio, Yoshua and Pal, Chris},
  booktitle = {International Conference on Learning Representations},
  year      = {2019}
}

@article{mizuta2025unified,
  title   = {Unified Generation-Refinement Planning: Bridging Guided Flow Matching and Sampling-Based {MPC} for Social Navigation},
  author  = {Mizuta, Kazuki and Leung, Karen},
  journal = {arXiv preprint arXiv:2508.01192},
  year    = {2025}
}

@inproceedings{jung2025joint,
  title     = {Joint Model-based Model-free Diffusion for Planning with Constraints},
  author    = {Jung, Wonsuhk and Mishra, Utkarsh A. and Ranawaka Arachchige, Nadun and Chen, Yongxin and Xu, Danfei and Kousik, Shreyas},
  booktitle = {Conference on Robot Learning},
  year      = {2025}
}

\appendix
\section{Proofs and implementation diagnostics}
\label{app:proofs}
\subsection{Setting and notation}
\label{app:setting}

Plans lie in $\mathbb{R}^{H\times T}$ with the Euclidean norm.
$\TV(\mu,\nu)=\sup_A|\mu(A)-\nu(A)|$ is the total-variation distance
between probability measures, and $\Phi$ is the standard normal cdf.
$C_k$ pins the observed coordinates at control step $k$ (and goal
coordinates on the mazes), and
$\hat x_t(y)=C_k(\operatorname{clip}(D_\theta(y,t,s_k)))$, where
$\operatorname{clip}$ clips PointMaze predictions to $[-1,1]$ and is
the identity on Avoiding and AntMaze.
Write $R_{k,t}$ for the pinned learned reverse kernel at $t\ge1$,
$F_k(y)=\hat x_0(y)$ for the terminal map, and
$B_k(z,\cdot)$ for shift, re-noising and pinning in
eq.~\eqref{eq:renoise}.
The unweighted warm repair is
$\Mimpl_k=B_kR_{k,t_0}\cdots R_{k,1}F_k$; a fresh plan has law
$p_k=\mu^{(k)}R_{k,K-1}\cdots R_{k,1}F_k$, where
$\mu^{(k)}=\Law(C_k(\sigma_{\mathrm{init}}\epsilon))$.
These kernels are specified by Algorithm~\ref{alg:p3}.

Let $Y_0\sim\mu^{(0)}$ and let $Y_r$ be the noisy-plan state at
event $r$, as in eq.~\eqref{eq:fk}.
Within a call, event $r$ at level $t(r)$ has transition
$Q_r=R_{k,t(r)+1}$; at a boundary it has
$Q_r(y,\cdot)=B_k(F_{k-1}(y),\cdot)$.
Thus $\Pref=\mu^{(0)}\prod_{r\ge1}Q_r$ is the reference chain in
eq.~\eqref{eq:fk}.
We distinguish the potential score $c^{\mathrm{pot}}$ from the
selector score $c^{\mathrm{sel}}$ when their margins differ, and set
$\psi_t(y)=e^{-\Lambda_t c^{\mathrm{pot}}(\hat x_t(y))}$.
Finite predictions give $0<\psi_t\le1$.
Plain P3 means P3 without rejuvenation.

For (E), fix $p=p_k$ and write
\begin{equation}
    \snr_t=\sigma_{\mathrm{init}}^{-1}
        \sqrt{\frac{\bar\alpha_t}{1-\bar\alpha_t}},
    \qquad
    G_t(x,\cdot)=\mathcal{N}(\snr_t x,I),
    \qquad
    \Mideal(x,dx')=\int G_{t_0}(x,dy)\,p(dx'\mid y).
    \label{eq:snr}
\end{equation}
The cosine schedule makes $\snr_t$ strictly decreasing;
at $K=20$ and $\sigma_{\mathrm{init}}=0.5$,
$\snr_0=22.28$, $\snr_{10}=1.689$, and $\snr_{19}=0.0049$.
The history $\mathcal G_k$ contains the observations, earlier
populations and random draws, and any replaced slots before call $k$.
Assumption (S) conditions on this history and removes within-call
weighting and resampling.
The Feynman--Kac identification fixes the observations externally;
conditioning on an entire endogenous closed-loop observation sequence
does not give the same law.

\subsection{The particle target}
\label{app:proof-target}

\begin{proposition}[Feynman--Kac representation]
\label{prop:target}
For fixed observations, plain P3 with carried weights is the SMC
particle system with ESS-triggered systematic resampling for
eq.~\eqref{eq:fk}.
(a) At $N=1$ it executes the unweighted warm sampler, whatever the
constraints; at $N=2$ constraints act only through the final choice
between two unweighted lineages. Both statements hold pathwise in
closed loop.
(b) If a boundary kernel forgets its input, the target from that
boundary onward is the restarted target. Given the pre-boundary
history, carried weights leave the first-event unnormalized
importance sum unbiased and multiply its variance by $N/\mathrm{ESS}$
relative to uniform weights.
\end{proposition}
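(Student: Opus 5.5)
The argument is mostly bookkeeping. Unroll Algorithm~\ref{alg:p3} for fixed observations and match each operation to a step of the generic SMC recursion for eq.~\eqref{eq:fk}. With reference chain $\Pref=\mu^{(0)}\prod_{r\ge1}Q_r$ and potentials $G_r=\psi_{t(r)}$, the generic algorithm does three things at each event $r$. It multiplies the incoming weight by $G_r(Y_r^{(i)})$, which is eq.~\eqref{eq:weight-update}. If the ESS falls below $N/2$, it resamples systematically and resets the weights to uniform. It then propagates by $Q_{r+1}$. Within a call, $Q_{r+1}=R_{k,t(r)}$ is exactly the learned reverse step. At a boundary, $Q_{r+1}(y,\cdot)=B_k(F_{k-1}(y),\cdot)$ is exactly the terminal prediction, shift, re-noise and pin. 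Carried weights mean the normalized weight entering the boundary is the $w_{\mathrm{in}}$ of the next call's first event, so no separate renormalization appears. I would check the indexing once: the potential is evaluated on $\hat x_{t(r)}(Y_r)$ before propagation, matching eq.~\eqref{eq:potential}. I would also check that no proposal correction is needed because the mutation kernel equals the reference kernel. That establishes the identification.

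For (a), at $N=1$ the normalized weight is identically $1$ and the ESS is $1\ge N/2$, so resampling never fires. If it did, resampling one particle returns that particle. The executed plan is therefore the output of the unweighted chain, i.e.\ warm repairs $\Mimpl_k$ after the first call, and the selector $\arg\min$ over one plan is trivial. At $N=2$, the ESS is $1/(W^2+(1-W)^2)\ge1=N/2$ for every $W$, so again no resampling occurs. The lineages evolve as two unweighted chains, and the constraints affect only the final $\arg\min c^{\mathrm{sel}}$ selection. Both arguments are deterministic statements about each realization. They do not use the fixed-observation assumption, so they hold pathwise when observations are generated in closed loop.

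For (b), suppose the boundary kernel forgets its input, i.e.\ $Q_r(y,\cdot)=\mu'$ does not depend on $y$. Then $\Gamma_R$ factorizes as $\Gamma_{r-1}(1)\cdot[\mu'\prod_{s>r}Q_s\prod_{s\ge r}G_s]$. After normalization, the law of $Y_{r:R}$ is the target restarted from $\mu'$. For the variance claim, condition on the pre-boundary history, so the normalized weights $W^{(i)}$ are fixed. The new states $Y^{(i)}$ are conditionally independent draws from a common law, since the kernel forgets its input. The estimator $\sum_i W^{(i)}G(Y^{(i)})$ then has mean $\mathbb E G$, which is unbiased for any weights summing to one. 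Its variance is $\sum_i (W^{(i)})^2\operatorname{Var}G=\operatorname{Var}G/\mathrm{ESS}$, against $\operatorname{Var}G/N$ under uniform weights, giving the ratio $N/\mathrm{ESS}$.

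The main obstacle is the identification step rather than (a) or (b). The careful parts are the exact placement of the boundary event (potential at $t=0$ before the shift, then re-noise to $t_0$), the pinning map $C_k$, and confirming that systematic resampling is a valid unbiased resampling scheme within the ESS-triggered framework. For the last point I would cite the adaptive-resampling results of Del Moral, Doucet and Jasra rather than reprove it.
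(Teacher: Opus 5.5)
Your proposal follows the paper's proof essentially step for step. It uses the same operation-by-operation identification with the recursion for $\Gamma_R=\mu^{(0)}\psi_{t(0)}\prod_{r=1}^R Q_r\psi_{t(r)}$ (no proposal ratio, since the mutation kernel is the reference kernel) and the bound $\mathrm{ESS}\ge1\ge N/2$ for (a); the paper makes your ``each realization'' argument explicit by coupling the Gaussian and environment noise across constraint configurations and inducting over calls. For (b) it uses the same factorization and the same $\sum_i W_i^2=1/\mathrm{ESS}$ computation. The one thing to change is how you justify systematic resampling. Del Moral, Doucet and Jasra treat adaptive schemes with multinomial resampling, and the paper itself notes that systematic resampling can fail to be consistent. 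Use instead the elementary fact the paper relies on: particle $i$'s interval of length $W_i$ contains $NW_i$ of the $N$ systematic points in expectation, which is all the identification needs.
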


\begin{proof}
At event $r$, particles move by $Q_r$, multiply their entering weights
by $\psi_{t(r)}$, and resample if $\mathrm{ESS}<N/2$.
No proposal ratio is needed. Systematic resampling is unbiased:
the interval of particle $i$, of length $W_i$, contains $NW_i$
resampling points in expectation. These are the SMC operations for
$\Gamma_R=\mu^{(0)}\psi_{t(0)}\prod_{r=1}^R Q_r\psi_{t(r)}$
\citep{delmoral2004feynman,chopin2020introduction}.
Execution takes the minimum-score member, not a draw from $\Pi_R$.

(a) For normalized nonnegative weights, $\mathrm{ESS}\ge1$, so the
strict trigger cannot fire at $N\le2$.
Moves ignore the scores and execution ignores the weights.
Couple runs with the same Gaussian draws and environment noise:
at $N=1$ induction over calls gives identical actions under every
constraint configuration; at $N=2$ it gives the planner that keeps two
unweighted slots and selects their minimum-score member.
The same argument applies to RDM as published, whose triggers never
read the constraints (Appendix~\ref{app:rdm}).

(b) Let event $b$ have $Q_b(y,\cdot)=\eta_k$.
The path measure factorizes into the pre-boundary factor and
$\eta_k(dY_b)\psi_{t_0}(Y_b)\prod_{r>b}Q_r\psi_{t(r)}$.
Normalization cancels the first factor, giving the restarted target.
Given $\mathcal G_k$, carried weights $W_i$ are fixed and
$Y_b^{(i)}\sim\eta_k$ are i.i.d. Hence, for bounded $f$ and
$\psi=\psi_{t_0}$,
\begin{equation}
    \widehat S(f)=\sum_i W_i\psi(Y_b^{(i)})f(Y_b^{(i)}),\qquad
    \mathbb E[\widehat S(f)\mid\mathcal G_k]=\eta_k(\psi f),\qquad
    \operatorname{Var}(\widehat S(f)\mid\mathcal G_k)
    =\frac{\operatorname{Var}_{\eta_k}(\psi f)}{\mathrm{ESS}(W)}.
    \label{eq:target-var}
\end{equation}
Uniform weights replace $\mathrm{ESS}(W)$ by $N$.
The normalized ratio $\widehat S(f)/\widehat S(1)$ is generally
biased; neither the unbiasedness nor the exact variance factor is
claimed for it or for later events.
\end{proof}

\paragraph{Current-plan marginal.}
\label{rem:marginal}
Let $R_k$ end call $k$, $x^k=F_k(Y_{R_k})$, and $\Pref_k=\Law_\Pref(x^k)$.
The current-plan marginal is
$\pi_k(dx)\propto\Psi_k(x)\Pref_k(dx)$, with
$\Psi_k(x)=\mathbb E_\Pref[\prod_{r\le R_k}\psi_{t(r)}(Y_r)\mid x^k=x]$.
Thus P3 reweights its reference law, but need not preserve its
conditional law on feasible plans: their intermediate predictions
can have different violations.
Terminal-only weighting gives
$\pi_k(dx)\propto e^{-\Lambda_0c^{\mathrm{pot}}(x)}\Pref_k(dx)$;
telescoped potentials give that target as well
\citep{wu2023practical,singhal2025general}.

\paragraph{Weight resetting and forgetting.}
\label{rem:target-reset}
Resetting discards the accumulated lineage weights: up to common
normalization, each is the product of potentials since its most
recent resampling, or initialization if it has never been resampled.
Resetting weights can help or hurt.
The implemented boundary never forgets exactly, but for two plans
$z,z'$ the Gaussian total-variation formula and contraction under
Markov kernels give
\[
    \TV(B_k(z,\cdot),B_k(z',\cdot))
    \le 2\Phi\!\left(\tfrac12\snr_{t_0}\|Sz-Sz'\|\right)-1 .
\]
Thus dependence on any fixed pair of shifted plans vanishes as
$\snr_{t_0}\to0$. This is a one-boundary statement, not an episode bound.

\paragraph{Scope.}
For fixed observations and bounded violations, standard consistency
results apply with multinomial resampling, including adaptive
resampling under the conditions of \citet{delmoral2012adaptive}.
Stratified resampling at every event is also consistent, whereas
systematic resampling can fail depending on particle order
\citep{gerber2019negative}; we claim no convergence theorem for our
slot order or closed-loop feedback.
Rejuvenation also lies outside Proposition~\ref{prop:target}:
newcomers replace weight-selected slots and receive assigned weights.

\subsection{Proofs of the retention results}
\label{app:proof-memory}

Under (E), with $p_Y$ the noisy marginal, the joint law is
\begin{equation}
    p(dx)G_{t_0}(x,dy)p(dx'\mid y)
    =p_Y(dy)p(dx\mid y)p(dx'\mid y).
    \label{eq:setting-exch}
\end{equation}
Thus $(x,x')$ is exchangeable and both marginals are $p$.
For any $f\in L^2(p)$, conditional independence gives the
data-augmentation identity \citep{liu1994covariance}
\begin{equation}
    \operatorname{Cov}(f(x),f(x'))
    =\operatorname{Var}(\mathbb E[f(x)\mid y]).
    \label{eq:memory-cov}
\end{equation}

\begin{proof}[Proof of Theorem~\ref{thm:memory}]
(i) Put $g(y)=p(A\mid y)$.
Since $\mathbb E g=\nu$,
$\Pr(x,x'\in A)=\mathbb E[g^2]=\nu^2+V_A$.
Dividing by $\nu$ gives retention; exchangeability gives crossing
probability $2(\nu-\nu^2-V_A)$.
The Gaussian likelihood is positive, so $0<g<1$ and
$V_A<\nu(1-\nu)$.

(ii) Write $\gamma=\snr_{t_0}$ and consider $0<\gamma'<\gamma$.
Couple $z=x+\gamma^{-1}\epsilon$ and $z'=z+c\epsilon'$,
where $c^2=\gamma'^{-2}-\gamma^{-2}$ and the noises are independent.
The noisier channel is a garbling of the first
\citep{blackwell1953equivalent}.
For $U=\mathbb E[f(x)\mid z]$, total variance gives
\[
    \operatorname{Var}(U)
    =\operatorname{Var}(\mathbb E[f(x)\mid z'])
      +\mathbb E[\operatorname{Var}(U\mid z')] .
\]
If equality holds between the first two terms, then
$U=g_1(z)=g_2(z+c\epsilon')$ almost surely.
The conditional Gaussian has an everywhere-positive density, so
Fubini's theorem forces $g_2$, and hence $U$, to be constant.
It follows that the convolution of the finite signed measure
$(f-\mathbb Ef)\,dp$ with $\mathcal N(0,\gamma^{-2}I)$ vanishes.
Its Fourier transform is multiplied by
$e^{-\|\omega\|^2/(2\gamma^2)}\ne0$, so uniqueness of Fourier
transforms implies $f=\mathbb Ef$ $p$-almost surely.
For $f=\mathbf1_A$ this contradicts $\nu\in(0,1)$.
Therefore $V_A$ increases strictly with $\gamma$, and decreases
strictly with $t_0$.

For the fresh limit, total variation contracts under the posterior
kernel $y\mapsto p(\cdot\mid y)$.
Writing $\Phi$ for the standard normal cdf, convexity and the
equal-covariance Gaussian formula give
\begin{equation}
    \TV(\Mideal(x,\cdot),p)
    \le \TV(G_{t_0}(x,\cdot),p_Y)
    \le \int\left[2\Phi\!\left(
        \frac{\gamma\|x-\tilde x\|}{2}\right)-1\right]p(d\tilde x).
    \label{eq:memory-fresh}
\end{equation}
The integrand is bounded by one and tends to zero with $\gamma$,
so dominated convergence gives convergence to $p$ for every $x$.
A second application, using
$V_A=\int_A(\Mideal(x,A)-\nu)p(dx)$, gives $V_A\to0$.
\end{proof}

\begin{proof}[Proof of Corollary~\ref{cor:memory-routes}]
Let $L\in\{a,b\}$ be the mixture label and draw $(L',x')$ jointly
from the posterior given $y$.
This realizes (E), with $L'$ the repaired plan's component.
Given $L$, the rescaled observation has law
$\mathcal N(\gamma\mu_L,\bar\sigma^2I)$,
$\bar\sigma^2=1+\gamma^2s^2$.
Bayes' rule gives
\[
    p(L=a\mid y)=
    \operatorname{sig}\!\left(\operatorname{logit}\pi+
      \frac{\|y-\gamma\mu_b\|^2-\|y-\gamma\mu_a\|^2}
           {2\bar\sigma^2}\right),
    \qquad \operatorname{sig}(u)=\frac1{1+e^{-u}} .
\]
Under $L=a$, the likelihood-ratio term is
$\kappa^2/2+\kappa\xi$ with $\xi\sim\mathcal N(0,1)$.
Consequently
\begin{equation}
    \Pr(L'=a\mid L=a)
    =\mathbb E\!\left[
       \operatorname{sig}(\operatorname{logit}\pi+
                          \kappa^2/2+\kappa\xi)\right].
    \label{eq:memory-keep}
\end{equation}
The function $u\mapsto\mathbb E[\operatorname{sig}(u+\kappa\xi)]$
is strictly increasing and equals $1/2$ at $u=0$ by symmetry.
Retention is therefore at least $1/2$ iff
$\operatorname{logit}\pi+\kappa^2/2\ge0$, proving the threshold.
For independent fresh draws, discovery probability is
$1-(1-\pi)^N$, giving the stated count.
At $\pi=0.01$ the thresholds are $3.0315$ and $68.97$,
hence $69$ draws.
For $s>0$, $\kappa<\|\mu_a-\mu_b\|/s$, so reducing the noise
cannot overcome insufficient intrinsic route separation.
\end{proof}

\paragraph{Mixture components and route regions.}
Let $D=\mu_a-\mu_b\ne0$ and
$A=\{x:\langle x-(\mu_a+\mu_b)/2,D\rangle>0\}$.
For $s>0$, the probability that $\mathbf1_A(x)$ disagrees with
$\mathbf1_{\{L=a\}}$ is $e=\Phi(-\|D\|/(2s))$.
By stationarity the same is true after repair, so
$|\Pr(x,x'\in A)-\Pr(L=L'=a)|\le2e$.
Thus a geometric interpretation requires small overlap; for
conditional retention of a rare route, the error must also be small
relative to its mass.

\paragraph{Linear-feature diagnostic.}
\label{cor:memory-linear}
For a unit vector $u$ with $s_u^2=\operatorname{Var}(u^\top x)\in(0,\infty)$,
eq.~\eqref{eq:memory-cov} implies
\[
    \operatorname{corr}(u^\top x,u^\top x')
    =1-\frac{\mathbb E[(u^\top x-\mathbb E[u^\top x\mid y])^2]}{s_u^2}
    \ge \frac{s_u^2\gamma^2}{1+s_u^2\gamma^2}.
\]
Indeed, the conditional mean minimizes squared error; restricting
to affine predictors from $u^\top y=\gamma u^\top x+u^\top\epsilon$
gives error $s_u^2/(1+s_u^2\gamma^2)$.
Figure~\ref{fig:kernel}(a) compares the measured correlations with this
lower bound.
It does not bound switching for non-Gaussian plans; the indicator
identity of Theorem~\ref{thm:memory} does.

\subsection{Kept and fresh plans}
\label{app:proof-keep}

Here $c_k=c^{\mathrm{sel}}$ and $A_k=\{c_k\le\delta\}$.
On the mazes $\delta=0$ is admissible.
On Avoiding the score includes consistency equalities that do not
vanish exactly; $0<\delta\le m_{\mathrm v}$, the virtual padding,
ensures acceptance implies satisfaction of the unpadded geometric
rows. An acceptable plan need not produce a clean executed step
because of tracking error, and a clean episode need not have an
acceptable full-horizon plan at every step.

\begin{proof}[Proof of Proposition~\ref{prop:keep}]
(i) Given $\mathcal G_k$, the independent failure events have
probabilities $1-r_k(z_i)$ and $1-\nu_k$; multiplication gives
eq.~\eqref{eq:keep-formula}.
The minimum-score plan lies in $A_k$ iff any member does.
The fresh-plan recovery clause follows by considering only the $n$
fresh draws; their mutual independence suffices, regardless of the
held plans. If every $r_k(z_i)\ge r$, the failure probability is at
most $(1-r)^m(1-\nu_k)^n$.

(ii) The union bound gives
$\Pr(\bigcup_{i=1}^N\{X_i\in A_k\}\mid\mathcal G_k)
\le\sum_i\Pr(X_i\in A_k\mid\mathcal G_k)=N\nu_k$.
Under independence with a common rate it equals $1-(1-\nu_k)^N$.
The bounds condition only on the history before this call and
therefore hold in closed loop.
For a success probability of at least $1/2$ the independent sampler
needs $N\ge\ln2/\ln(1/(1-\nu_k))$.
For cold SMC, $\nu_k$ in the union bound is its mean final-particle
rate, which can depend on $N$ and differs from the unweighted
model's rate.

(iii) At a fixed observation, integrate over independent held plans
$z_i\sim p(\cdot\mid A)$ and independent repair noise.
Theorem~\ref{thm:memory}(i) gives each repair acceptance
$a=\nu+V_A/\nu\ge\nu$.
Together with independent fresh draws this gives
$(1-a)^m(1-\nu)^n\le(1-\nu)^{m+n}$.
\end{proof}

Best-of-$N$ satisfies (S) exactly with $(m,n)=(0,N)$.
Plain P3 uses $(N,0)$ and rejuvenation $(N-n,n)$ only under (S).
In the implementation, resampling both changes the marginal laws
and creates shared ancestors.
Newcomers are generated by a weighted cold sub-chain and enter the
combined population before further selection; they are not $n$
independent final draws.
Thus the recovery clause motivates rejuvenation but is not a bound
for Table~\ref{tab:results}'s implementation.
The one-step comparison also requires the specific distribution
$p(\cdot\mid A)$, not an arbitrary selection of acceptable plans.

\subsection{Implementation diagnostics}
\label{app:kernel}

\paragraph{Departures from exact repair.}
\label{rem:memory-inexact}
Even an exact posterior-mean network would not implement (E):
a learned DDPM step draws one Gaussian around its predicted mean,
whereas the exact reverse transition mixes these Gaussians over
$p(x\mid y_t)$; the terminal step returns a prediction rather than a
posterior draw.
The network is also trained at unit noise scale but sampled at
$\sigma_{\mathrm{init}}=0.5$.
Shifting and the changing observation add further departures in
closed loop. The shallow-depth failure in
Table~\ref{tab:abl-t0} is consistent with retaining an infeasible
continuation, but does not identify the cause or locate an optimal
depth from the theory alone.

\paragraph{Repair at fixed states.}
We use $17$ Avoiding states from all five model seeds, including ten
two-sided decision states chosen before the measurement by a pilot
side fraction in $[0.2,0.8]$.
At each state we draw $4096$ cold plans, re-noise each without shifting,
and denoise under the same observation for every $t_0=0,\ldots,19$.
Side is the sign of the final lateral position relative to the
centerline. We record switching relative to $2\pi(1-\pi)$, where
$\pi$ is the cold side fraction, and the repaired side fraction $\pi'$.
A second draw uses independent noise.
For correlation, the first half of the cold sample fits two principal
directions and the second half evaluates them; the maze diagnostic
uses three states per maze and one direction.
Routes are designated well separated when the distance between
side means divided by their within-side spread is at least six,
a threshold fixed before the run.

\begin{figure}[htbp]
    \centering
    \includegraphics[width=\linewidth]{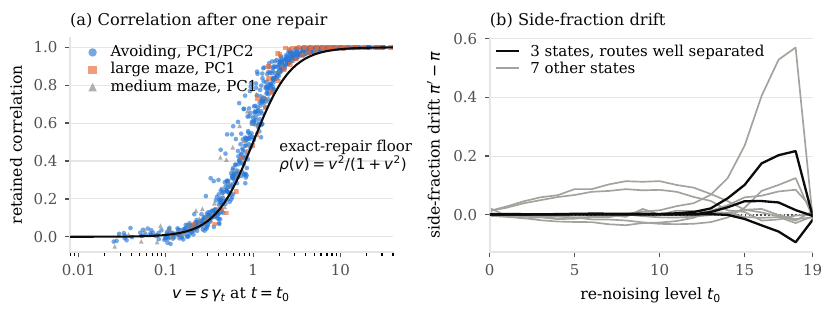}
    \caption{
        Learned repair at fixed states.
        (a) Correlation along held-out plan directions against the
        exact-repair floor derived in Appendix~\ref{app:proof-memory}.
        (b) Side-fraction drift at the ten decision states; (E) predicts
        zero. Black: the three states whose two routes are well separated.
    }
    \label{fig:kernel}
\end{figure}

Figure~\ref{fig:theory}(b) shows the main draw's switch ratios.
At the three separated-route states, ratios stay at or below
$0.03$ through $t_0=12$; at $t_0=19$ all ten ratios are in
$[0.98,1.03]$.
Figure~\ref{fig:kernel}(b) shows that learned repair is not stationary:
side-fraction drift reaches
$0.115$ at $t_0\le12$ and $0.57$ at $t_0=18$.
The switch rate reaches $1.39$ times the fresh-draw value.
Measured correlations are consistent with the exact-repair lower
bound at moderate depths, but some fall below it at larger depths.
The side-fraction changes show that learned repair does not preserve
the plan distribution, as (E) would require.

We also estimate $\kappa$ in Corollary~\ref{cor:memory-routes} from
the side means and pooled within-side spread along their difference.
At $t_0=10$, all ten decision states pass the estimated threshold
and have minority-side retention above $1/2$ in both draws.
The threshold is not reliable at larger depths. At one well-separated
state, $t_0=17$ gives $\kappa=1.77$ against a threshold of $1.57$,
but measured retention is $0.38$ and $0.41$ in the two draws.
These moment-based estimates do not establish a two-Gaussian fit;
the tested minority masses are about $0.20$--$0.46$, not $0.01$.

\paragraph{Acceptable plans at visited states.}
\label{app:find-keep}
We log cold SMC and P3 at $N=16$, without rejuvenation:
on cluttered, $10$ trials each for two model seeds and one contrast seed,
$t_0=10$ and $\delta=0.01$; on the narrow large maze,
five trials each for two model seeds, $H=64$, $t_0=16$ and $\delta=0$.
Avoiding reuses the ablation trials; the maze diagnostics use separate trials.
At every visited state, $64$ extra unweighted cold plans estimate
$\nu_k$ independently of the planner's samples and are never executed.
Four unweighted repairs of each previously acceptable held plan
estimate bare repair retention.
We also record the acceptable fraction among its final descendants
after weighting and resampling (zero if none survive).
In Table~\ref{tab:find-keep}, $\hat q_k^{\mathrm{desc}}$ denotes this
descendant fraction and $\hat r_k^{\mathrm{rep}}$ the bare-repair
fraction; neither is the exact kernel probability in
Proposition~\ref{prop:keep}.

\begin{table}[htbp]
    \centering
    \small
    \caption{
        Find/keep diagnostics. Lost all: cold-SMC episodes with no
        acceptable plan at some step; $\nu_k$ there: median estimated
        fresh rate at such lapses. Retention columns report medians.
        Held: fraction of P3 steps with an acceptable plan among states
        with estimated $\nu_k<1/N$; fresh: mean plug-in estimate
        $1-(1-\nu_k)^N$ at those same states.
    }
    \label{tab:find-keep}
\begin{tabular}{@{}llcccccccc@{}}
\toprule
 & & \multicolumn{2}{c}{clean} & \multicolumn{2}{c}{cold SMC} & \multicolumn{2}{c}{P3, all steps} & \multicolumn{2}{c}{P3, $\nu_k<1/N$} \\
\cmidrule(lr){3-4}\cmidrule(lr){5-6}\cmidrule(lr){7-8}\cmidrule(l){9-10}
Task & Seed & cold & P3 & lost all & $\nu_k$ there & $\hat q_k^{\mathrm{desc}}$ & $\hat r^{\mathrm{rep}}_k$ & held & fresh \\
\midrule
Cluttered & 1 & 0/10 & 10/10 & 10/10 & 0.016 & 1.00 & 0.95 & 0.82 & 0.13 \\
Cluttered & 2 & 0/10 & 9/10 & 10/10 & 0.016 & 1.00 & 0.94 & 0.61 & 0.22 \\
Cluttered & 3 & 8/10 & 9/10 & 7/10 & 0.031 & 1.00 & 0.96 & 0.54 & 0.17 \\
Narrow maze & 1 & 0/5 & 2/5 & 5/5 & 0.000 & 1.00 & 1.00 & 0.27 & 0.05 \\
Narrow maze & 2 & 0/5 & 4/5 & 5/5 & 0.000 & 1.00 & 1.00 & 0.30 & 0.07 \\
\bottomrule
\end{tabular}

\end{table}

On the first two cluttered seeds, cold SMC loses every acceptable plan
in all $20$ logged episodes, at lapse states with median estimated
fresh rate $0.016$.
At P3's own states with estimated $\nu_k<1/N$, it holds an acceptable
plan on $0.82$ and $0.61$ of steps, against fresh estimates $0.13$
and $0.22$.
The table's retention columns summarize all steps. On the separate
subset of hard states with estimated $\nu_k<1/N$, median bare-repair
retention is only $0.05$--$0.40$ across the five settings, below the
corresponding post-resampling descendant fractions.
Cold SMC likewise exceeds the independent-fresh estimate at its own
hard states. Assumption (S) excludes this weighting and resampling.
The fresh column is an estimate, not a bound for cold SMC, and a zero
rate means no successes among $64$ probes, not zero model mass.

\subsection{Projection and reweighting}
\label{app:proof-transport}

Let $F$ be a closed, nonempty feasible set. An exact nearest-point
projection of $x\notin F$ lies in $\partial F$:
an interior nearest point could be moved slightly toward $x$ while
remaining feasible.
Consequently, if $p(\partial F)=0$, the projected law places at least
$p(F^c)$ mass on a $p$-null boundary.
A selection target $G(x)p(dx)/p(G)$, for measurable $G\ge0$ with
$0<p(G)<\infty$, is instead absolutely continuous with respect to $p$.
It equals $p(\cdot\mid F)$ for $G=\mathbf1_F$ when $p(F)>0$,
but an arbitrary path-dependent weight need not preserve feasible
route shares (Appendix~\ref{app:proof-target}).
Zero clearance here is relative to the projected set; padding may
give positive clearance from the original constraints, and equality
rows can make all of $F$ its boundary.

This distributional distinction does not predict the switching rate
of the local solver used inside the learned denoising process.

\section{Implementation details}
\label{app:impl}

Algorithm~\ref{alg:p3} gives P3 for joint state--action plans with one
executed action per call. PointMaze also pins the goal coordinates and
clips clean predictions to $[-1,1]$; Avoiding does not clip them.
AntMaze uses unclipped state-only plans, obtains actions from inverse
dynamics, and executes and shifts by five steps per call
(Appendix~\ref{app:antmaze}).
Cold SMC runs the cold branch at every control step, best-of-$N$ skips
lines~\ref{alg:weight}--\ref{alg:resample-end}, and the weight-reset
ablation adds the optional reset of line~\ref{alg:reset}; rejuvenation
(Section~\ref{sec:rejuvenation}) is the optional step of
line~\ref{alg:rejuv}.

\begin{algorithm}[t]
\caption{Persistent particle planning (P3). Every assignment applies to
all particles $i=1,\ldots,N$.}
\label{alg:p3}
\begin{algorithmic}[1]
\Require model $D_\theta$ with $K$ reverse steps, residual $c$, strengths
    $\Lambda_t=\beta_t\lambda_t$, population size $N$, re-noising level
    $t_0$, noise scale $\sigma_{\mathrm{init}}$; $\epsilon^{(i)},\xi^{(i)}$
    denote fresh $\mathcal{N}(0,I)$ draws
\State $\mathrm{warm}\gets\textsc{false}$
\For{control step $k=0,1,2,\ldots$}
    \State observe $s_k$
    \If{$\mathrm{warm}=\textsc{false}$}
        \Comment{cold start from noise}
        \State $y^{(i)}\gets\sigma_{\mathrm{init}}\,\epsilon^{(i)}$;
            $\ \log w^{(i)}\gets0$; $\ T\gets K-1$
    \Else
        \Comment{persistence: shift, re-noise, keep the weights}
        \State $y^{(i)}\gets\sqrt{\bar\alpha_{t_0}}\,S(x^{(i)})
            +\sigma_{\mathrm{init}}\sqrt{1-\bar\alpha_{t_0}}\,\epsilon^{(i)}$;
            $\ T\gets t_0$
            \Comment{eqs.~\eqref{eq:shift} and~\eqref{eq:renoise}}
        \State \textbf{optional} (weight-reset ablation): $\log w^{(i)}\gets0$
            \label{alg:reset}
        \State \textbf{optional} (rejuvenation): replace the $y^{(i)}$ of the
            $n$ lowest-weight particles by fresh plans, run by this procedure
            with $n$ particles from $K-1$ to $t_0+1$, and set their
            $\log w^{(i)}$ to the log of the population's mean weight
            \label{alg:rejuv}
    \EndIf
    \State pin the observed coordinates of $y^{(i)}$ to $s_k$
    \For{$t=T,T-1,\ldots,0$}
        \Comment{denoising clock}
        \State $\hat{x}^{(i)}\gets D_\theta(y^{(i)},t,s_k)$, pinned to $s_k$
        \State $\log w^{(i)}\gets\log w^{(i)}-\Lambda_t\,c(\hat{x}^{(i)})$
            \label{alg:weight}
            \Comment{eqs.~\eqref{eq:potential} and~\eqref{eq:weight-update}}
        \State $W^{(i)}\gets\exp(\log w^{(i)})\big/\sum_j\exp(\log w^{(j)})$
        \If{$1\big/\sum_i(W^{(i)})^2<N/2$}
            \Comment{eq.~\eqref{eq:ess}}
            \State draw ancestors $a^{(i)}$ by systematic resampling from $W$
            \State $(y^{(i)},\hat{x}^{(i)})\gets(y^{(a^{(i)})},\hat{x}^{(a^{(i)})})$;
                $\ \log w^{(i)}\gets0$
        \Else
            \State $\log w^{(i)}\gets\log W^{(i)}$
                \label{alg:resample-end}
        \EndIf
        \If{$t>0$}
            \State $y^{(i)}\gets m_t(\hat{x}^{(i)},y^{(i)})
                +\sigma_{\mathrm{init}}e^{\ell_t/2}\,\xi^{(i)}$, pinned to $s_k$
                \Comment{reverse step}
        \Else
            \State $x^{(i)}\gets\hat{x}^{(i)}$
                \Comment{final plans}
        \EndIf
    \EndFor
    \State $i^\star\gets\arg\min_i c(x^{(i)})$, lowest index on ties
        \Comment{selection}
    \State execute the first action of $x^{(i^\star)}$;
        keep $\{x^{(i)},\log w^{(i)}\}_{i=1}^{N}$;
        $\ \mathrm{warm}\gets\textsc{true}$
\EndFor
\end{algorithmic}
\end{algorithm}

\paragraph{Constraint residuals.}
In the Avoiding experiments the residuals $c_j$ of
eq.~\eqref{eq:plan-cost}, aggregated by their maximum, encode geometric exclusions (the physical
pillars and the added test-time regions and boundaries), action bounds,
and planning-model consistency between consecutive desired positions
and the action, $x^{\mathrm{des}}_{h+1}=x^{\mathrm{des}}_h+a_h$.
The geometric residuals are those of the violation metric of
Section~\ref{sec:baselines}, evaluated on the measured-position
channel with the margins of Appendix~\ref{app:protocol}, after the
affine map from normalized to physical coordinates.

\paragraph{Potential schedule.}
The coefficients of the potential in eq.~\eqref{eq:potential} follow
\begin{equation}
    u_t
    =
    u_{\mathrm{start}}
    +
    (u_{\mathrm{end}}-u_{\mathrm{start}})
    \left(1-\frac{t}{K-1}\right)^{1.5},
    \qquad u\in\{\lambda,\beta\},
    \label{eq:schedule}
\end{equation}
with
$(\lambda_{\mathrm{start}},\lambda_{\mathrm{end}})=(0.5,15)$ and
$(\beta_{\mathrm{start}},\beta_{\mathrm{end}})=(0.1,15)$, so that
$\Lambda_t=\beta_t\lambda_t$ rises from $0.05$ at $t=K-1$ to $225$ at
$t=0$.
A warm call resumes this schedule at $t_0$ rather than compressing it
into fewer iterations, so the potential applied at a given level is the
same in cold and warm calls.

\paragraph{Weights and resampling.}
The effective sample size
\begin{equation}
    \operatorname{ESS}_t
    =
    \frac{1}{\sum_i (W_t^{(i)})^2}
    \label{eq:ess}
\end{equation}
is computed after every update, and when $\operatorname{ESS}_t<N/2$
ancestors are drawn by systematic resampling from $W_t$.
Resampling copies the selected ancestors and resets weights to uniform;
otherwise the normalized weights are retained.
P3 carries the population and its weights into the next call, continuing
the weight product of Proposition~\ref{prop:target} across the boundary.
The weight-reset ablation instead starts each call with uniform weights.

\paragraph{Indexing of warm calls.}
Levels are zero-based: a cold call runs $t=K-1,\ldots,0$ and a warm
call runs $t=t_0,\ldots,0$, so a warm call performs $t_0+1$ denoiser
evaluations, and $t_0=10$ with $K=20$ corresponds to 11 iterations.
Re-noising uses the cumulative signal coefficient $\bar\alpha_{t_0}$
of the cosine schedule at index $t_0$.

\paragraph{Noise-scale convention.}
All checkpoints are sampled with half the DDPM noise scale:
the chain is initialized at $\sigma_{\mathrm{init}}\,\epsilon$ with
$\epsilon\sim\mathcal{N}(0,I)$, and the reverse noise injected at level
$t$ has standard deviation $\sigma_{\mathrm{init}}e^{\ell_t/2}$, where
$\ell_t$ is the DDPM posterior log-variance and
$\sigma_{\mathrm{init}}=0.5$.
Equation~\eqref{eq:renoise} uses the same sampling noise scale,
$\sigma_{\mathrm{init}}=0.5$, rather than unit noise scale.
Using $\sigma=1$ at the same $t_0$ would reduce the squared
signal-to-noise ratio $\snr_{t_0}^2$ of eq.~\eqref{eq:snr} by a factor
of four.
All compared methods use this sampling noise scale.

\paragraph{Action selection.}
The executed plan is the population member with minimal $c$ after the
final denoising step, with ties resolved by the lowest particle index.

\paragraph{RDM baseline.}
\label{app:rdm}
RDM \citep{zhou2023adaptive} executes one plan.
After every environment step it writes the observed state into the plan
and scores the partially executed plan by the KL terms of the model's
variational bound at small noise levels.
It regenerates the plan from noise every $H$ steps or when the
normalized score falls below $l_s$, repairs it by re-noising and
denoising under the current state (dropping the executed transitions)
when the score falls below $l_f>l_s$, and otherwise keeps executing it,
with actions from inverse dynamics towards the plan's next state.
Every run uses the unguided row's sampler, model seeds, trials and step
caps.
RDM leaves three details open: we normalize the score as
$\ell=\min(1,L_{\mathrm{gen}}/L)$, so that a fresh plan scores one;
we use the two smallest nondegenerate KL levels of our chains; and we
write states only into the plan, because the action channels of the
Avoiding checkpoints are not consistent with their planned setpoints.
As published, RDM does not see the added constraints: its paths do not
depend on them, because they never change the dynamics its likelihood
monitors.

\paragraph{RDM with constraints.}
We add constraint projection during repair and regeneration, and
trigger replanning when a retained plan violates an added constraint.
RDM's step ``constrain the plan with the conditions'', used to pin the
current state, also applies Projection's projector and margins.
A kept plan is checked against the constraints padded by the selection
margin, following RDM's rule that an infeasible plan must be replanned.
A violation triggers a regeneration.
On cluttered, the projector's SLSQP failed on $71\%$ of repairs and
$15\%$ of regenerations in the same episodes.
We use the thresholds $(0.5,0.98)$ with repair depth $9$ on Avoiding,
repair at every step with depth $5$ on the medium maze, and
$(0.9,0.99)$ with depth $25$ on AntMaze.
On the large maze we use RDM's published thresholds and plan $10$
trials per seed. The narrow result includes $50$ completed trials.

\paragraph{DiRecT baseline.}
\label{app:direct}
We port the official DiRecT code \citep{direct2026} to our checkpoints
and run it with the unguided row's sampler, model seeds, trials and step
caps, replanning at every control step (every five on AntMaze).
In the last half of the denoising steps ($10$ of $20$; $25$ of $50$ on
AntMaze) it takes the reverse step, solves
$x^\star=\arg\min_{x\in S}\tfrac12\lambda\bar\alpha_{t-1}\beta_t^{-1}
\|x-\hat x_0\|^2+C(x)$ with IPOPT ($\lambda=1$), and adds
$\sqrt{\bar\alpha_{t-1}}(x^\star-\hat x_0)$ to the noisy plan; the last
step returns $x^\star$.
As in the official code, $\hat x_0$ is the clean estimate at the current
plan on Avoiding and at the uncorrected step on the mazes, and $S$
includes linear dynamics fitted by least squares to the model's training
transitions (none on AntMaze, whose plans have no actions).
On Avoiding, $S$ holds the added constraints and the pillars padded by
DiRecT's $0.025$\,m and $C=0$; on cluttered this padding closes the
corridor, so we use Projection's $0.01$.
On the mazes, $S$ holds the obstacles with DiRecT's margin ($0.4$ added
to the superellipse power sum) and actions within their data range, and
$C$ is the squared path length; on AntMaze, $S$ is the planning set of
the other rows.
Constraints apply to every planned state but the current one.
IPOPT did not converge on under $0.3\%$ of solves ($5.0\%$ on AntMaze),
where the last iterate is used, as in the official code.

\paragraph{SafeDiffuser baseline.}
\label{app:safediffuser}
We port the official SafeDiffuser code \citep{xiao2025safediffuser} to our
checkpoints and run it with the unguided row's sampler, model seeds,
trials and step caps.
At every denoising step each planned position moves by the step closest
to the unguided update that satisfies $\nabla b\cdot u+b\ge0$, with $b$
the constraint's power sum minus $1.01$ (the official robust term) or,
for a boundary, its signed distance; the other channels follow the
unguided update, and the QPs are solved with qpth as in the official code.
Of its variants, RoS imposes this at every step, ReS relaxes it (penalty
weight $100$) except in the last steps, and TVS grows the barrier over
the last steps; we map the official $256$-step schedules to our chains by
their fraction, and use RoS on both-hard and the medium maze and ReS
elsewhere.
As in its official evaluation, the executed action tracks the plan's next
waypoint, $(p_1-p)+(v_1-v)$ on PointMaze; on Avoiding the tracked
waypoints are the commanded setpoints, and on AntMaze the inverse-dynamics head acts on the
plan.
Avoiding and AntMaze, where SafeDiffuser defines no setting, use the
other rows' padded constraints, on AntMaze only the added disc (the
official code has no wall barriers); the mazes use the obstacles with the
robust term only.
A waypoint QP that is infeasible, or that asks for a move wider than the
normalized workspace (a vanishing barrier gradient inside a large
power-4 obstacle), keeps the unguided update; this affected at most
$4.9\%$ of waypoint QPs (right-hard) and under $0.3\%$ on the mazes.
Relaxation and fallback updates do not guarantee barrier satisfaction.
The position corrections also leave velocities and actions unchanged,
so a tracked trajectory can violate constraints even when its planned
positions do not.

\section{Experimental protocol}
\label{app:protocol}

\paragraph{Models, horizons and re-noising.}
For each task and model seed, all methods share the same pretrained
model. Cold calls use $K=20$ reverse iterations, or $K=50$ on AntMaze.
The standard execution interval is one action, or five on AntMaze;
RDM instead checks its likelihood after each action and replans
adaptively (Appendix~\ref{app:rdm}).
The planning horizon is $H=8$ on Avoiding and the medium maze, $H=32$ on
AntMaze and $H=64$ on the large maze.
Population methods use $N=16$ particles, $N=32$ in the Avoiding columns
and $N=4$ in the AntMaze column of Table~\ref{tab:results} ($N=32$ on
AntMaze: Appendix~\ref{app:antmaze}).
P3 re-noises to $t_0=10$ on Avoiding, $16$ on the large maze, $4$ on the
medium maze and $40$ of $50$ on AntMaze; in Table~\ref{tab:results} it rejuvenates
$n=8$ particles per step on Avoiding only.

\paragraph{Constraints and margins.}
Avoiding aggregates its residuals by their maximum and the mazes sum
the obstacle penetration of the plan's positions.
On Avoiding, potentials and selectors pad the added constraints by
$0.01$, or $0.04$ for best-of-$N$, SMC and P3 in
Table~\ref{tab:avoiding-performance}, and the pillars by $0.025$;
Projection pads its constraints by $0.025$ ($0.01$ for the cluttered
regions), with approximate dynamics rows that couple its corrections to
the actions.
On cluttered, two radius-$0.08$ discs have centers $0.20$ apart;
padding each by $0.025$ closes the passage between them, so Projection
uses $0.01$ instead.
On the mazes the sampling methods pad the obstacles by $0.01$ on the
large layout and by $0.3$ on the medium layout. Projection pads them by the ball radius $0.1$; on
AntMaze every constraint-aware planner keeps a $1.0$ clearance from the
disc.
On Avoiding and PointMaze, grading uses the unpadded constraints on the
measured state before each planning call; the terminal post-action
state is not included. AntMaze grades the recorded center path after
every executed action, including the terminal state.

\paragraph{Trials.}
Avoiding and PointMaze use five model seeds and $50$ trials per seed,
method and configuration ($100$ for the Avoiding population
rows of Table~\ref{tab:avoiding-performance}). Computational cost limits
Projection on the large maze to $5$--$10$ trials per seed.
AntMaze uses six evaluation seeds of $20$ trials, paired across methods
on their environment seeds.
Episodes last at most $200$ control steps on Avoiding, $1200$ on the
large maze and $800$ on the medium maze.
Methods are paired by trial seed, which fixes each episode's start state.

\paragraph{Timing.}
The Avoiding ablation latencies were measured separately from the
success-rate runs, with one process on an idle CPU, one model seed
and ten trials per setting. Population-size and noise-depth sweeps
use separate timing runs. Appendix~\ref{app:timing} compares all
methods on one GPU.

\section{Full results}
\label{app:full-results}

Tables~\ref{tab:task-success} and~\ref{tab:full-avoiding} give task
success and accumulated violation for the runs in Table~\ref{tab:results}.
Tables~\ref{tab:maze-performance} and~\ref{tab:persistence-effects}
report plan switches and episode lengths.

\begin{table}[t]
    \centering
    \footnotesize
    \caption{Task success for the runs in Table~\ref{tab:results},
        irrespective of constraint violations. Means and standard
        deviations are across model seeds (evaluation seeds on AntMaze);
        trial counts are as in Table~\ref{tab:full-avoiding}.}
    \label{tab:task-success}
    \resizebox{\linewidth}{!}{
\setlength{\tabcolsep}{4pt}
\begin{tabular}{@{}lcccccccc@{}}
\toprule
 & \multicolumn{4}{c}{Avoiding} & \multicolumn{2}{c}{Large maze} & Medium & AntMaze \\
\cmidrule(lr){2-5}\cmidrule(lr){6-7}\cmidrule(lr){8-8}\cmidrule(l){9-9}
Method & Both & Cluttered & Left & Right & Broad & Narrow & Blocked & Turn \\
\midrule
\multicolumn{9}{@{}l}{\textit{Task success $\uparrow$}} \\
\addlinespace[2pt]
Unguided & \result{0.80}{0.11} & \result{0.79}{0.11} & \result{0.80}{0.11} & \result{0.79}{0.11} & \result{\mathbf{1.00}}{0.00} & \result{\mathbf{1.00}}{0.00} & \result{\mathbf{0.76}}{0.17} & \result{0.82}{0.08} \\
RDM + projection & \result{0.77}{0.10} & \result{0.85}{0.05} & \result{0.79}{0.10} & \result{0.63}{0.10} & \result{0.88}{0.12} & \result{0.98}{0.04} & \result{0.58}{0.10} & \result{0.84}{0.06} \\
Projection & \result{0.87}{0.05} & \result{0.97}{0.02} & \result{0.95}{0.02} & \result{0.88}{0.08} & \result{\mathbf{1.00}}{0.00} & \result{\mathbf{1.00}}{0.00} & \result{0.26}{0.13} & \result{0.83}{0.08} \\
DiRecT & \result{\mathbf{1.00}}{0.00} & \result{0.72}{0.12} & \result{0.88}{0.03} & \result{0.93}{0.07} & \result{\mathbf{1.00}}{0.00} & \result{\mathbf{1.00}}{0.00} & \result{0.12}{0.11} & \result{0.85}{0.09} \\
SafeDiffuser & \result{0.50}{0.21} & \result{0.53}{0.29} & \result{0.66}{0.32} & \result{0.44}{0.36} & \result{0.41}{0.43} & \result{0.52}{0.36} & \result{0.53}{0.10} & \result{0.78}{0.06} \\
Best-of-$N$ & \result{0.99}{0.01} & \result{0.65}{0.25} & \result{\mathbf{1.00}}{0.00} & \result{0.95}{0.05} & \result{\mathbf{1.00}}{0.00} & \result{\mathbf{1.00}}{0.00} & \result{0.55}{0.11} & \result{0.85}{0.05} \\
SMC & \result{0.99}{0.00} & \result{0.82}{0.16} & \result{\mathbf{1.00}}{0.00} & \result{0.97}{0.04} & \result{\mathbf{1.00}}{0.00} & \result{\mathbf{1.00}}{0.00} & \result{\mathbf{0.76}}{0.13} & \result{0.86}{0.13} \\
Persistent best-of-$N$ & \result{0.98}{0.03} & \result{0.96}{0.04} & \result{\mathbf{1.00}}{0.00} & \result{0.92}{0.13} & \result{\mathbf{1.00}}{0.00} & \result{\mathbf{1.00}}{0.00} & \result{0.35}{0.11} & \result{0.83}{0.09} \\
P3 (ours) & \result{\mathbf{1.00}}{0.00} & \result{\mathbf{1.00}}{0.00} & \result{\mathbf{1.00}}{0.00} & \result{\mathbf{0.98}}{0.04} & \result{\mathbf{1.00}}{0.00} & \result{\mathbf{1.00}}{0.00} & \result{0.45}{0.09} & \result{\mathbf{0.89}}{0.07} \\
\bottomrule
\end{tabular}
}
\end{table}

\begin{table}[t]
    \centering
    \footnotesize
    \caption{
        Accumulated violation, mean $\pm$ standard deviation across five
        model seeds (clean success is in Table~\ref{tab:results}).
        Avoiding: $50$ trials per seed ($100$ for the population rows);
        best-of-$N$, SMC and P3 use $N=32$ and margin $0.04$, P3 with eight
        fresh particles per step (Section~\ref{sec:baselines}); the last
        Avoiding block is P3 at the same setting without rejuvenation.
        Mazes: large maze ($H=64$), medium maze ($H=8$) and AntMaze U
        ($H=32$), $50$ trials per seed (AntMaze: six evaluation seeds of
        $20$ paired trials; large-maze Projection: $5$--$10$); population
        methods use $N=16$, except on AntMaze ($N=4$).
        Unguided, RDM + projection and Projection use one sample;
        persistent best-of-$N$ uses P3's setting without weighting or
        resampling (Appendix~\ref{app:pbon}); RDM + projection uses $10$ trials per seed
        on the large maze (Appendix~\ref{app:rdm}).
    }
    \label{tab:full-avoiding}
    \label{tab:full-maze}
\setlength{\tabcolsep}{4pt}
\begin{tabular}{@{}lcccc@{}}
\toprule
 & \multicolumn{4}{c}{Avoiding} \\
\cmidrule(l){2-5}
Method & Both-hard & Cluttered & Left-hard & Right-hard \\
\midrule
\multicolumn{5}{@{}l}{\textit{Accumulated violation $\downarrow$}} \\
\addlinespace[2pt]
Unguided & \result{2.717}{0.788} & \result{2.463}{0.582} & \result{1.484}{2.506} & \result{6.992}{3.522} \\
RDM + projection & \result{0.295}{0.086} & \result{0.594}{0.274} & \result{0.646}{0.361} & \result{1.030}{0.987} \\
Projection & \result{0.035}{0.040} & \result{0.016}{0.015} & \result{0.025}{0.019} & \result{0.215}{0.188} \\
DiRecT & \result{\mathbf{0.000}}{0.000} & \result{0.021}{0.014} & \result{0.004}{0.001} & \result{0.003}{0.004} \\
SafeDiffuser & \result{0.009}{0.004} & \result{0.848}{1.637} & \result{\mathbf{0.000}}{0.000} & \result{4.633}{5.890} \\
Best-of-$N$ & \result{0.001}{0.001} & \result{0.014}{0.013} & \result{0.004}{0.002} & \result{0.014}{0.027} \\
SMC & \result{\mathbf{0.000}}{0.001} & \result{0.013}{0.017} & \result{0.004}{0.004} & \result{\mathbf{0.000}}{0.000} \\
Persistent best-of-$N$ & \result{0.001}{0.001} & \result{0.002}{0.002} & \result{0.006}{0.005} & \result{\mathbf{0.000}}{0.000} \\
P3 (ours) & \result{\mathbf{0.000}}{0.000} & \result{\mathbf{0.000}}{0.000} & \result{0.001}{0.001} & \result{\mathbf{0.000}}{0.000} \\
\midrule
\multicolumn{5}{@{}l}{\textit{P3 without rejuvenation}} \\
\addlinespace[2pt]
Clean success $\uparrow$ & \result{0.88}{0.12} & \result{0.90}{0.10} & \result{0.94}{0.05} & \result{1.00}{0.00} \\
Accumulated violation $\downarrow$ & \result{0.090}{0.118} & \result{0.055}{0.053} & \result{0.002}{0.001} & \result{0.000}{0.000} \\
\bottomrule
\end{tabular}

    \vspace{6pt}
\setlength{\tabcolsep}{4pt}
\begin{tabular}{@{}lcccc@{}}
\toprule
 & \multicolumn{2}{c}{Large maze} & Medium maze & AntMaze U \\
\cmidrule(lr){2-3} \cmidrule(lr){4-4} \cmidrule(lr){5-5}
Method & Broad & Narrow & Blocked & Turn \\
\midrule
\multicolumn{5}{@{}l}{\textit{Accumulated violation $\downarrow$}} \\
\addlinespace[2pt]
Unguided & \result{5.684}{2.430} & \result{14.442}{5.039} & \result{18.396}{0.786} & \result{0.188}{0.097} \\
RDM + projection & \result{2.319}{2.273} & \result{1.530}{0.754} & \result{6.505}{1.336} & \result{0.177}{0.062} \\
Projection & \result{3.977}{3.019} & \result{1.447}{1.979} & \result{4.005}{1.992} & \result{0.148}{0.052} \\
DiRecT & \result{\mathbf{0.000}}{0.000} & \result{\mathbf{0.001}}{0.002} & \result{\mathbf{0.000}}{0.000} & \result{0.145}{0.035} \\
SafeDiffuser & \result{0.404}{0.468} & \result{12.062}{7.333} & \result{15.544}{1.378} & \result{0.174}{0.071} \\
Best-of-$N$ & \result{0.389}{0.605} & \result{3.589}{3.935} & \result{14.999}{1.515} & \result{0.131}{0.026} \\
SMC & \result{0.528}{1.021} & \result{3.506}{4.444} & \result{18.272}{1.060} & \result{0.126}{0.024} \\
Persistent best-of-$N$ & \result{0.169}{0.109} & \result{2.525}{2.994} & \result{4.229}{3.323} & \result{0.173}{0.042} \\
P3 (ours) & \result{0.031}{0.021} & \result{0.246}{0.454} & \result{7.092}{3.988} & \result{\mathbf{0.091}}{0.029} \\
\bottomrule
\end{tabular}

\end{table}

\begin{table}[t]
    \centering
    \small
    \caption{
        Large maze: plan switches (changes of the selected plan's end
        cell) per episode and episode
        length for the runs of Table~\ref{tab:full-maze}, mean $\pm$
        standard deviation across five model seeds.
    }
    \label{tab:maze-performance}
\begin{tabular}{@{}lcccc@{}}
\toprule
 & \multicolumn{2}{c}{Plan switches $\downarrow$} & \multicolumn{2}{c}{Episode length (steps) $\downarrow$} \\
\cmidrule(lr){2-3} \cmidrule(lr){4-5}
Method & Broad & Narrow & Broad & Narrow \\
\midrule
Unguided & \result{72.6}{15.9} & \result{72.5}{15.9} & \result{\mathbf{529}}{39} & \result{528}{38} \\
Projection & \result{58.6}{17.2} & \result{37.8}{10.5} & \result{535}{36} & \result{432}{60} \\
Best-of-$N$ & \result{68.3}{18.6} & \result{44.9}{11.0} & \result{586}{39} & \result{426}{5} \\
SMC & \result{69.1}{21.8} & \result{43.2}{13.8} & \result{615}{50} & \result{421}{16} \\
P3 (ours) & \result{\mathbf{44.5}}{11.7} & \result{\mathbf{28.2}}{4.8} & \result{620}{84} & \result{\mathbf{408}}{6} \\
\bottomrule
\end{tabular}

\end{table}

\paragraph{Avoiding.}
Table~\ref{tab:persistence-effects} compares plan-side switching under
Projection, SMC and P3. At the Table~\ref{tab:results} setting,
Projection's SLSQP solver fails on
$1$--$9\%$ of calls. Figure~\ref{fig:route} shows executed paths on
cluttered and on the medium maze.

\paragraph{Margin sensitivity.}
Table~\ref{tab:margin-sensitivity} varies each planner's Avoiding margin
without changing its other settings or the unpadded grading.
The base margin is $0.025$ on both-hard, left-hard and right-hard, and
$0.01$ on cluttered. P3 at the base margin retains its advantage over
Projection on cluttered and right-hard. On both-hard, increasing
Projection's margin to $0.04$ improves its clean success from $0.834$
to $0.942$, leaving a smaller gap to P3's $0.986$.
The same increase harms Projection on left-hard and right-hard, where
its SLSQP fallback rate rises from $0.033$ to $0.233$ and from $0.040$
to $0.339$, respectively. Projection is not tested with the larger
margin on cluttered because the padded discs block its narrow passage
(Appendix~\ref{app:protocol}). P3 can use $0.04$ there because its
margin enters a soft potential and selector, not a hard projection.

\begin{table}[t]
    \centering
    \footnotesize
    \caption{Avoiding margin sensitivity: clean success, mean $\pm$
        standard deviation across five model seeds and $100$ trials per
        seed. Base margin is $0.025$, except $0.01$ on cluttered. P3 uses
        $N=32$ and eight fresh particles per step throughout.}
    \label{tab:margin-sensitivity}
    \setlength{\tabcolsep}{4pt}
    \begin{tabular}{@{}lcccc@{}}
        \toprule
        & \multicolumn{2}{c}{Projection} & \multicolumn{2}{c}{P3} \\
        \cmidrule(lr){2-3} \cmidrule(lr){4-5}
        Case & Base & $0.04$ & Base & $0.04$ \\
        \midrule
        Both-hard & \result{0.834}{0.063} & \result{0.942}{0.044}
                  & \result{0.944}{0.036} & \result{0.986}{0.015} \\
        Cluttered & \result{0.446}{0.270} & n/a
                  & \result{0.980}{0.009} & \result{0.994}{0.008} \\
        Left-hard & \result{0.824}{0.112} & \result{0.364}{0.240}
                  & \result{0.892}{0.103} & \result{0.936}{0.037} \\
        Right-hard & \result{0.796}{0.095} & \result{0.108}{0.135}
                  & \result{0.966}{0.043} & \result{0.978}{0.044} \\
        \bottomrule
    \end{tabular}
\end{table}

\begin{figure}[t]
    \centering
    \includegraphics[width=\linewidth]{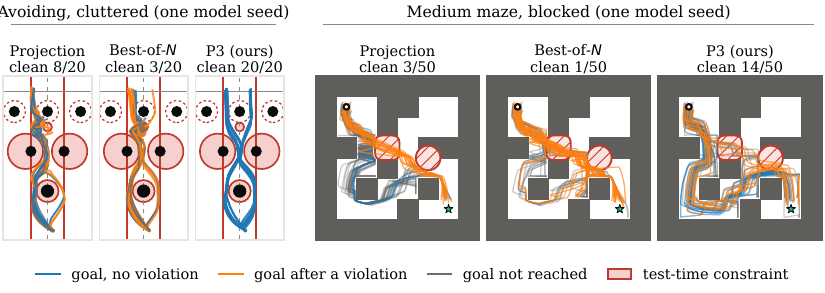}
    \caption{
        Executed paths on cluttered (twenty trials per method on the model
        seed whose Projection clean success is closest to its five-seed mean)
        and on the medium maze (the 50 trials per method of one model seed,
        within $0.07$ of the five-seed mean clean success for every method).
        Cluttered uses $N=16$, $t_0=10$, potential margin $0.01$ and no
        rejuvenation, rather than the Table~\ref{tab:results} setting;
        the medium-maze runs use that table's setting.
        The plots show side changes around the exclusion regions on
        cluttered (left) and trajectories through or around the blocked
        corridor on the medium maze (right).
    }
    \label{fig:route}
\end{figure}

\begin{table}[t]
    \centering
    \small
    \caption{
        Projection, SMC and P3 on Avoiding at their Table~\ref{tab:results}
        settings. SMC and P3 use $N=32$, margin $0.04$ and $100$ trials
        per seed; Projection uses one plan and $50$ trials per seed.
        Entries are means $\pm$ standard deviations across five model seeds.
    }
    \label{tab:persistence-effects}
    \begin{tabular}{@{}lccc@{}}
        \toprule
        Constraint configuration & Projection & SMC & P3 (ours) \\
        \midrule
        \multicolumn{4}{@{}l}{
            \textit{Plan-side switches per episode $\downarrow$}
        } \\
        \addlinespace[2pt]
        Both-hard
        & \result{5.34}{2.70}
        & \result{4.49}{1.56}
        & \result{\mathbf{1.79}}{0.90} \\
        Cluttered
        & \result{12.42}{2.56}
        & \result{10.63}{2.92}
        & \result{\mathbf{7.72}}{0.91} \\
        Left-hard
        & \result{7.28}{3.36}
        & \result{2.87}{1.02}
        & \result{\mathbf{2.35}}{0.62} \\
        Right-hard
        & \result{11.10}{4.09}
        & \result{3.31}{0.95}
        & \result{\mathbf{2.63}}{0.46} \\
        \bottomrule
    \end{tabular}
\end{table}

\begin{figure}[t]
    \centering
    \includegraphics[width=\linewidth]{
        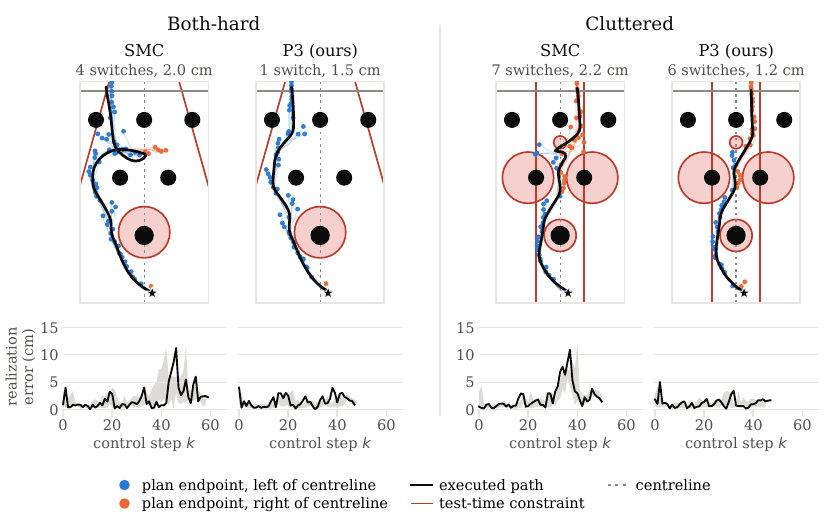
    }
    \caption{
        One example episode of SMC and P3 on both-hard and cluttered.
        These runs use $N=16$, $t_0=10$, potential margin $0.01$ and
        no rejuvenation, with eight episodes per method and configuration.
        Top: the executed path, with each selected plan's endpoint coloured by
        its side of the centerline, so a colour change is a plan-side switch.
        Bottom: realization error, the distance between each plan's endpoint and the position reached
        $H-1$ steps later (black), with the 10--90\% band of the other
        seven episodes of the same method and configuration.
    }
    \label{fig:switches-rollout}
\end{figure}

\paragraph{Persistence at matched settings.}
Table~\ref{tab:persistence-effects} compares P3 with SMC at the same
$N$ and margin.
P3 switches plan side less often in every configuration, by up to
$2.5$ times. Accumulated violations are reported in
Table~\ref{tab:full-avoiding}.
Figure~\ref{fig:switches-rollout} shows single episodes: on both-hard
SMC keeps changing its planned continuation near the upper obstacles,
and the plotted P3 episode on cluttered has smaller realization errors.
Realization error is the distance between a planned endpoint and the position
reached $H-1$ steps later.

\paragraph{Large maze.}
DiRecT has the lowest mean penetration on both presets
(Table~\ref{tab:full-maze}). Among the population methods, P3 has the
lowest mean penetration, including $0.031$ on broad against $0.389$
for best-of-$N$, and the fewest endpoint-cell switches.
On narrow, its mean penetration is less than one tenth that of the
fresh populations. Appendix~\ref{app:maze} gives further large-maze
details.

\paragraph{AntMaze.}
At $N=4$, P3 enters the disc less often than SMC and Projection ($29\%$
of episodes, against $38\%$ and $48\%$), has the lowest mean
penetration and reaches the goal as often as SMC ($0.89$ and $0.86$;
Tables~\ref{tab:task-success} and~\ref{tab:full-maze}).

\subsection{Planning time}
\label{app:timing}
Table~\ref{tab:timing} times every method of Table~\ref{tab:results} on
one configuration per task. All cells ran one after another on an
otherwise idle machine with one RTX 4070 Ti GPU and an i7-13700F CPU.
The diffusion model runs on the GPU and denoises all particles of a
population in one batch. The solvers of Projection, DiRecT and
SafeDiffuser run on the CPU.
Each cell runs three episodes of $50$ control steps with one model seed
(two episodes of ten steps for Projection and RDM + projection on
narrow), and the first step of every episode, which includes the cold
start, is not timed.
On AntMaze, a control step is one replan of five environment steps.
RDM + projection keeps its plan on most steps and repairs or regenerates
it on the others, so its step time varies widely. On narrow it kept its
plan at every timed step.

\begin{table}[t]
    \centering
    \footnotesize
    \caption{Wall-clock time per control step in milliseconds, median and
        mean over the timed steps, on one RTX 4070 Ti. Each method uses its
        Table~\ref{tab:results} setting, except that the AntMaze population
        rows are timed at $N=32$. P3 rejuvenates only on Avoiding,
        and the last row times it there without rejuvenation.
    }
    \label{tab:timing}
\setlength{\tabcolsep}{4pt}
\begin{tabular}{@{}lrrrrrrrr@{}}
\toprule
 & \multicolumn{2}{c}{Cluttered} & \multicolumn{2}{c}{Narrow} & \multicolumn{2}{c}{Medium maze} & \multicolumn{2}{c}{AntMaze} \\
\cmidrule(lr){2-3} \cmidrule(lr){4-5} \cmidrule(lr){6-7} \cmidrule(lr){8-9}
Method & Median & Mean & Median & Mean & Median & Mean & Median & Mean \\
\midrule
Unguided & 121 & 120 & 107 & 108 & 86 & 86 & 213 & 213 \\
RDM + projection & 4.8 & 41 & 5.5 & 6.4 & 44 & 51 & 653 & 621 \\
Projection & 142 & 149 & 27\,238 & 26\,143 & 101 & 101 & 214 & 236 \\
DiRecT & 100 & 101 & 467 & 688 & 139 & 138 & 399 & 2\,048 \\
SafeDiffuser & 235 & 236 & 328 & 331 & 210 & 210 & 493 & 486 \\
\midrule
Best-of-$N$ & 120 & 121 & 109 & 109 & 82 & 82 & 200 & 203 \\
SMC & 112 & 112 & 118 & 118 & 86 & 87 & 224 & 225 \\
Persistent best-of-$N$ & 108 & 109 & 89 & 90 & 22 & 22 & 171 & 174 \\
P3 (ours) & 117 & 118 & 93 & 93 & 24 & 24 & 186 & 187 \\
P3 without rejuvenation & 65 & 65 & -- & -- & -- & -- & -- & -- \\
\bottomrule
\end{tabular}

\end{table}

\section{Ablations}
\label{app:ablations}

\newcommand{\ablPopCold}{%
        $N=1$
        & \result{0.75}{0.11}
        & \result{0.00}{0.00}
        & \result{2.3805}{0.5500}
        & \result{7.46}{3.34}
        & 20
        & 0.060 \\
        $N=2$
        & \result{0.62}{0.07}
        & \result{0.02}{0.02}
        & \result{0.8752}{0.2505}
        & \result{11.22}{4.76}
        & 40
        & 0.117 \\
        $N=4$
        & \result{0.77}{0.18}
        & \result{0.16}{0.20}
        & \result{0.1440}{0.1234}
        & \result{12.92}{4.03}
        & 80
        & 0.114 \\
        $N=8$
        & \result{0.87}{0.10}
        & \result{0.31}{0.31}
        & \result{0.0445}{0.0400}
        & \result{12.88}{4.71}
        & 160
        & 0.105 \\
        $N=16$
        & \result{0.90}{0.08}
        & \result{0.35}{0.33}
        & \result{0.0323}{0.0288}
        & \result{11.42}{5.21}
        & 320
        & 0.115 \\
        $N=32$
        & \result{0.97}{0.02}
        & \result{0.46}{0.42}
        & \result{0.0212}{0.0200}
        & \result{8.68}{3.09}
        & 640
        & 0.134 \\
}
\newcommand{\ablPopPthree}{%
        $N=1$
        & \result{0.65}{0.18}
        & \result{0.06}{0.04}
        & \result{1.5026}{0.9990}
        & \result{3.08}{1.20}
        & 11
        & 0.034 \\
        $N=2$
        & \result{0.69}{0.19}
        & \result{0.29}{0.18}
        & \result{0.6528}{0.7916}
        & \result{4.64}{1.32}
        & 22
        & 0.067 \\
        $N=4$
        & \result{0.94}{0.06}
        & \result{0.76}{0.14}
        & \result{0.0413}{0.0203}
        & \result{4.56}{0.45}
        & 44
        & 0.064 \\
        $N=8$
        & \result{0.99}{0.02}
        & \result{0.92}{0.07}
        & \result{0.0002}{0.0002}
        & \result{4.52}{0.58}
        & 88
        & 0.060 \\
        $N=16$
        & \result{1.00}{0.01}
        & \result{0.99}{0.02}
        & \result{0.0035}{0.0070}
        & \result{4.46}{0.59}
        & 176
        & 0.067 \\
        $N=32$
        & \result{1.00}{0.00}
        & \result{0.99}{0.02}
        & \result{0.0002}{0.0002}
        & \result{4.24}{0.92}
        & 352
        & 0.076 \\
}
\newcommand{\ablTzero}{%
        SMC
        & \result{0.90}{0.08}
        & \result{0.35}{0.33}
        & \result{0.0323}{0.0288}
        & \result{11.42}{5.21}
        & 20
        & 0.114 \\
        $t_0=2$
        & \result{0.96}{0.03}
        & \result{0.84}{0.13}
        & \result{0.1201}{0.1122}
        & \result{3.18}{0.56}
        & 3
        & 0.020 \\
        $t_0=4$
        & \result{0.99}{0.01}
        & \result{0.98}{0.02}
        & \result{0.0106}{0.0211}
        & \result{3.82}{0.48}
        & 5
        & 0.031 \\
        $t_0=6$
        & \result{1.00}{0.00}
        & \result{0.98}{0.01}
        & \result{0.0001}{0.0001}
        & \result{3.80}{0.41}
        & 7
        & 0.043 \\
        $t_0=8$
        & \result{1.00}{0.00}
        & \result{0.99}{0.01}
        & \result{0.0000}{0.0000}
        & \result{4.12}{0.57}
        & 9
        & 0.053 \\
        $t_0=10$
        & \result{1.00}{0.01}
        & \result{0.99}{0.02}
        & \result{0.0035}{0.0070}
        & \result{4.46}{0.59}
        & 11
        & 0.063 \\
        $t_0=12$
        & \result{1.00}{0.01}
        & \result{0.97}{0.02}
        & \result{0.0001}{0.0002}
        & \result{4.86}{0.85}
        & 13
        & 0.075 \\
        $t_0=14$
        & \result{0.99}{0.02}
        & \result{0.90}{0.14}
        & \result{0.0009}{0.0015}
        & \result{5.20}{0.82}
        & 15
        & 0.085 \\
        $t_0=16$
        & \result{0.98}{0.02}
        & \result{0.82}{0.17}
        & \result{0.0022}{0.0027}
        & \result{5.62}{0.60}
        & 17
        & 0.098 \\
        $t_0=19$
        & \result{0.96}{0.03}
        & \result{0.40}{0.40}
        & \result{0.0305}{0.0294}
        & \result{11.36}{5.06}
        & 20
        & 0.113 \\
}
\newcommand{\ablCarry}{%
        $t_0=2$
        & \result{0.84}{0.13}
        & \result{0.1201}{0.1122}
        & \result{3.18}{0.56} \\
        $t_0=4$
        & \result{0.98}{0.02}
        & \result{0.0106}{0.0211}
        & \result{3.82}{0.48} \\
        $t_0=6$
        & \result{0.98}{0.01}
        & \result{0.0001}{0.0001}
        & \result{3.80}{0.41} \\
        $t_0=8$
        & \result{0.99}{0.01}
        & \result{0.0000}{0.0000}
        & \result{4.12}{0.57} \\
        $t_0=10$
        & \result{0.99}{0.02}
        & \result{0.0035}{0.0070}
        & \result{4.46}{0.59} \\
        $t_0=12$
        & \result{0.97}{0.02}
        & \result{0.0001}{0.0002}
        & \result{4.86}{0.85} \\
        $t_0=14$
        & \result{0.90}{0.14}
        & \result{0.0009}{0.0015}
        & \result{5.20}{0.82} \\
        $t_0=16$
        & \result{0.82}{0.17}
        & \result{0.0022}{0.0027}
        & \result{5.62}{0.60} \\
        $t_0=19$
        & \result{0.40}{0.40}
        & \result{0.0305}{0.0294}
        & \result{11.36}{5.06} \\
}
\newcommand{\ablReset}{%
        $t_0=2$
        & \result{0.84}{0.13}
        & \result{0.2293}{0.2035}
        & \result{3.56}{0.61} \\
        $t_0=4$
        & \result{0.94}{0.06}
        & \result{0.0559}{0.0888}
        & \result{4.04}{0.27} \\
        $t_0=6$
        & \result{0.98}{0.01}
        & \result{0.0079}{0.0158}
        & \result{4.12}{0.31} \\
        $t_0=8$
        & \result{0.97}{0.02}
        & \result{0.0002}{0.0002}
        & \result{4.58}{0.48} \\
        $t_0=10$
        & \result{0.98}{0.03}
        & \result{0.0037}{0.0072}
        & \result{4.76}{0.62} \\
        $t_0=12$
        & \result{0.98}{0.01}
        & \result{0.0034}{0.0066}
        & \result{4.90}{0.52} \\
        $t_0=14$
        & \result{0.91}{0.12}
        & \result{0.0009}{0.0013}
        & \result{5.38}{0.56} \\
        $t_0=16$
        & \result{0.82}{0.22}
        & \result{0.0028}{0.0040}
        & \result{5.66}{0.70} \\
        $t_0=19$
        & \result{0.43}{0.41}
        & \result{0.0295}{0.0269}
        & \result{10.94}{5.23} \\
}

The population-size, re-noising-depth and weight-retention sweeps
use cluttered, five model seeds, 50 trials per seed, at most 200
control steps, the unchanged learned transition,
accumulated weighting with ESS resampling, and final selection by
minimum violation.
Along each swept axis the remaining settings are held at $N=16$
particles, re-noising level $t_0=10$, weights carried across control
steps, virtual potential margin $0.01$, and no rejuvenation.
Entries are means $\pm$ standard deviations across the five seeds.
Appendix~\ref{app:pbon} separately compares persistence with and
without SMC across all tasks, with its own trial counts.

\subsection{Population size}
\label{app:abl-pop}

\begin{table}[h]
    \centering
    \small
    \caption{
        Population size on cluttered constraints, at matched $N$ for
        SMC and P3 ($t_0=10$).
        Violation is the accumulated residual along the executed path
        and switches are plan-side changes per episode; lower is
        preferable for both.
        Evals is $N$ times the number of reverse iterations: per
        planning call for SMC and per warm replanning call for P3.
        P3's first call uses the full $20$ iterations.
        s/step is wall-clock measured under the protocol in
        Appendix~\ref{app:protocol}, in a separate run from
        Table~\ref{tab:abl-t0}.
        Denoiser evaluations grow linearly in $N$, whereas measured
        latency changes little between $N=2$ and $N=16$.
    }
    \label{tab:abl-pop}
    \begin{tabular}{@{}lcccccc@{}}
        \toprule
        & Task succ.\ $\uparrow$ & Clean succ.\ $\uparrow$
        & Violation $\downarrow$ & Switches $\downarrow$ & Evals $\downarrow$
        & s/step \\
        \midrule
        \multicolumn{7}{@{}l}{\textit{SMC}} \\
        \addlinespace[2pt]
        \ablPopCold
        \midrule
        \multicolumn{7}{@{}l}{\textit{P3 (ours), $t_0=10$}} \\
        \addlinespace[2pt]
        \ablPopPthree
        \bottomrule
    \end{tabular}
\end{table}

We vary $N$ for fresh and retained populations under the same
constraint potential.
At $N=1$, clean success is $0.00$ for cold SMC and $0.06$ for P3.
Proposition~\ref{prop:target}(a) shows that, with one particle,
constraint weighting cannot change the executed plan.
With two particles the effective sample size never falls below $N/2$,
so resampling never fires and the constraints act only through the
final choice between two unweighted lineages.
P3 at $N=8$ reaches $0.92$ clean success against $0.46$ for cold SMC
at $N=32$, using $8\times11=88$ denoiser evaluations per warm replanning call,
compared with $32\times20=640$ for cold SMC at $N=32$, and takes
$60$ rather than $134$\,ms in the serial CPU measurement.
The reduction is $7.3$-fold in denoiser evaluations and $2.2$-fold in
measured latency.
Proposition~\ref{prop:keep} compares finding acceptable plans with
retaining them under
independent final plans. Its $N\nu$ bound also holds for dependent
plans, but $\nu$ must then be their mean final-particle acceptance
rate, not the unweighted model's rate.
The $N=1$ result concerns this unchanged learned transition and
does not imply that all single-trajectory
warm-start methods behave the same way.
The sweep shows no consistent trend in switching with $N$ relative
to the between-seed variability.

\subsection{Re-noising depth}
\label{app:abl-t0}

\begin{table}[h]
    \centering
    \small
    \caption{
        Re-noising depth on cluttered constraints at $N=16$ with
        carried weights.
        The SMC row is the cold-start control.
        $t_0=19=K-1$ re-noises to the start of the reverse chain.
        The accumulated weights are retained, but the re-noised input
        depends only weakly on the previous trajectory.
        Iter.\ counts reverse diffusion iterations per warm replanning
        call for P3 and per planning call for SMC; P3's first call uses
        all $20$ iterations.
        Latency uses the separate serial CPU protocol of
        Appendix~\ref{app:protocol}. Cold SMC and $t_0=19$ have the same
        number of denoising iterations and latencies within $1\%$.
    }
    \label{tab:abl-t0}
    \begin{tabular}{@{}lcccccc@{}}
        \toprule
        & Task succ.\ $\uparrow$ & Clean succ.\ $\uparrow$
        & Violation $\downarrow$ & Switches $\downarrow$ & Iter. & s/step \\
        \midrule
        \ablTzero
        \bottomrule
    \end{tabular}
\end{table}

Clean success is highest at intermediate depths, $0.97$--$0.99$
for $t_0=4$--$12$, compared with $0.84$ at $t_0=2$ and $0.40$ at
$t_0=19$. The shallowest setting also has the largest accumulated
violation among the persistent configurations. At $t_0=19$, the
re-noised input depends only weakly on the retained plan
(Appendix~\ref{app:proof-target}).
Plan-side switching increases overall, from
$3.18$ at $t_0=2$ to $11.36$ at $t_0=19$, against $11.42$ for cold SMC.
This agrees qualitatively with the rise toward the fresh-draw rate
in Theorem~\ref{thm:memory}; the theorem does not predict the
closed-loop values because the implemented repair is inexact
(Appendix~\ref{app:kernel}).
At $t_0=6$ and $t_0=10$, both within the high-success range, serial
latency on one idle CPU is $43$ and $63$\,ms per step, compared with
$114$\,ms for cold SMC.

\subsection{Weight retention}
\label{app:abl-weights}

\begin{table}[h]
    \centering
    \small
    \caption{
        Resetting weights to uniform at every control step, at $N=16$
        on cluttered. The carried-weight comparison is in
        Table~\ref{tab:abl-t0}. Both variants retain trajectories and
        differ only in the weights entering the next step.
    }
    \label{tab:abl-weights}
    \begin{tabular}{@{}lccc@{}}
        \toprule
        & Clean succ.\ $\uparrow$ & Violation $\downarrow$
        & Switches $\downarrow$ \\
        \midrule
        \ablReset
        \bottomrule
    \end{tabular}
\end{table}

Table~\ref{tab:abl-weights} reports reset weights, to compare with
the carried weights in Table~\ref{tab:abl-t0}.
At $t_0=10$, carrying versus resetting weights gives mean clean
success of $0.99$ versus $0.98$. Carrying weights reduces accumulated
violation at shallow depths, including $0.1201$ versus $0.2293$
at $t_0=2$.
Carried weights also reduce plan-side switching slightly at every depth
below $t_0=19$.

\paragraph{Carried weights without trajectory memory.}
At $t_0=19$, the re-noising reaches the start of the reverse chain,
where $\snr_{t_0}\approx0.005$. The re-noised input therefore depends
only weakly on the previous trajectory, while the accumulated weights
are retained.
Cold SMC, carried weights and reset weights have mean clean success
$0.35$, $0.40$ and $0.43$, respectively, and similar observed switch rates.
These clean-success rates are below the $0.97$--$0.99$ observed at
$t_0=4$--$12$: retaining weights at $t_0=19$ does not reproduce those
intermediate-depth results.
The near-equality of the three switch rates is consistent with the
fresh-draw limit of Theorem~\ref{thm:memory}(ii).

\subsection{Persistence without SMC weighting and resampling}
\label{app:pbon}

\begin{table}[t]
    \centering
    \footnotesize
    \caption{
        Persistent best-of-$N$ against P3 at the ablation setting ($N=16$,
        $t_0=10$, margin $0.01$, no rejuvenation), five model seeds with
        100 trials each.
    }
    \label{tab:pbon-abl}
\setlength{\tabcolsep}{4pt}
\begin{tabular}{@{}llcccc@{}}
\toprule
 & Method & Both-hard & Cluttered & Left-hard & Right-hard \\
\midrule
Clean & Persistent best-of-$N$ & \result{0.54}{0.30} & \result{0.87}{0.18} & \result{0.46}{0.12} & \result{0.77}{0.18} \\
 & P3 & \result{0.78}{0.13} & \result{0.99}{0.01} & \result{0.81}{0.14} & \result{0.95}{0.05} \\
\addlinespace[2pt]
Violation & Persistent best-of-$N$ & \result{0.6503}{0.8046} & \result{0.0009}{0.0012} & \result{0.0245}{0.0113} & \result{0.0128}{0.0216} \\
 & P3 & \result{0.2855}{0.3524} & \result{0.0045}{0.0088} & \result{0.0065}{0.0059} & \result{0.0004}{0.0005} \\
\addlinespace[2pt]
Switches & Persistent best-of-$N$ & \result{1.48}{0.53} & \result{4.80}{0.70} & \result{3.55}{1.41} & \result{4.54}{1.12} \\
 & P3 & \result{1.39}{0.44} & \result{4.44}{0.54} & \result{3.68}{1.03} & \result{4.45}{0.59} \\
\addlinespace[2pt]
\bottomrule
\end{tabular}

\end{table}

Resetting the carried weights leaves the weighting and resampling
inside every planning call in place, so it cannot show whether this
selection still matters once plans are kept.
Persistent best-of-$N$ removes it: at every control step all $N$ plans
are shifted and re-noised as in Section~\ref{sec:persistence}, denoised
with the unweighted reverse kernel, and the plan with the smallest
violation is executed.
Each column runs it at that column's P3 setting: on Avoiding $N=32$,
$t_0=10$, potential margin $0.04$ and eight fresh draws per step, which
replace randomly chosen plans because persistent best-of-$N$ has no weights to rank
them by; on the large maze $N=16$ and $t_0=16$; on the medium maze
$N=16$, $t_0=4$ and margin $0.3$; on AntMaze $N=4$ and $t_0=40$ of
$K=50$, shifted by the five executed steps.
Every comparison pairs methods by trial seed, with the same initial
environment state.

Tables~\ref{tab:results}, \ref{tab:task-success} and~\ref{tab:full-avoiding}
report clean success, task success and violation for P3 and persistent
best-of-$N$, which uses neither SMC weighting nor resampling.
On cluttered, persistent best-of-$N$ is clean on $0.87$ of trials, compared with
$0.34$ for fresh best-of-$N$ and $0.48$ for cold SMC.
On the medium maze, persistent best-of-$N$ and P3 have mean clean
success $0.20$ and $0.21$; fresh populations stay at or below
$0.01$. P3 has higher observed clean success than persistent
best-of-$N$ on the other tasks, with the largest large-maze difference
on narrow ($0.85$ against $0.56$).
Without rejuvenation and at the smaller population of the ablations
(Table~\ref{tab:pbon-abl}), the clean-success difference is larger on
both-hard, left-hard and right-hard. On both-hard and left-hard, persistent best-of-$N$ reaches $0.54$ and $0.46$, compared with $0.78$ and $0.81$ for P3.
Its observed plan-side switch rates remain close to P3's.

\section{Large maze}
\label{app:maze}

The main text reports both presets at $H=64$; this appendix gives their
protocol and further results.

\paragraph{Configurations.}
The large maze is the D4RL maze2d-large layout of Gymnasium-Robotics
($12\times 9$ cells), with the start cell $(1,3)$ and the goal cell
$(7,9)$ fixed.
The \emph{broad} and \emph{narrow} presets are the superellipse obstacle
configurations that DiRecT \citep{direct2026} defines for this layout,
with two and six obstacles (Figure~\ref{fig:envs}); the walls are
enforced by the simulator and are not part of the constraint set.
Episodes last at most $1200$ control steps.
The sampling methods pad the obstacles by $0.01$ in their potentials and
selectors, and Projection by the ball radius $0.1$.

\paragraph{Planning model.}
The maze model is a joint state--action diffusion model over $64$-step
segments (forces, positions, velocities and the goal position pinned at
every step), a temporal U-Net of base width $64$ trained for $60$k steps
on $250$ self-collected episodes of the large maze with random start and
goal cells; five independently initialised models are used.
Sampling clips the predicted clean plan to the normaliser's range at
every reverse step.
Unguided, the models reach the fixed goal cell on every trial, but their
routes graze the test-time obstacles: clean success is $0.06$ on broad
and $0.02$ on narrow (Table~\ref{tab:results}).

\paragraph{Potential.}
The maze potential is the summed penetration depth of the plan's
positions after the pinned start ($63$ at $H=64$) into the test-time obstacles, padded by $0.01$.
Unlike Avoiding's maximum over geometric, dynamics and bound residuals,
this score measures only obstacle penetration. All population methods
use the same score; dynamics rows remain in the projector's constraint set.
SMC and P3 use $\lambda_{\mathrm{end}}=\beta_{\mathrm{end}}=5$.

\begin{figure}[t]
    \centering
    \includegraphics[width=\linewidth]{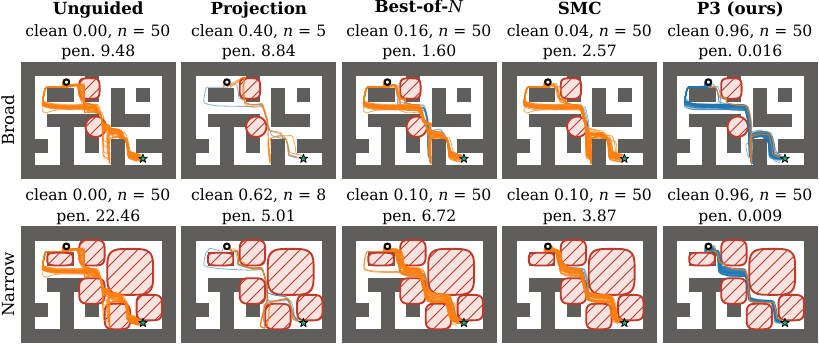}
    \caption{
        Executed paths on the large maze for one model seed, broad (top)
        and narrow (bottom). Blue: goal reached cleanly;
        orange: after a violation. Titles give clean success, mean
        penetration and trial count $n$.
    }
    \label{fig:maze-paths}
\end{figure}

\paragraph{Executed paths.}
Figure~\ref{fig:maze-paths} overlays the executed paths of every trial
for one model seed.
All routes are in the same homotopy class, so the constraint concerns
where inside the corridor the ball travels. The plotted P3 trajectories
pass farther from the obstacles at corridor corners.

\paragraph{Projection baseline.}
Projection projects the predicted clean plan onto the padded constraint set
at each of the last $11$ of the $20$ reverse steps, with SLSQP over $512$
variables.
On one GPU it takes $27$\,s per control step on narrow, against
$0.09$--$0.12$\,s for the population methods (Appendix~\ref{app:timing}).
Its computational cost limited the campaign to $5$--$10$ trials per seed.

\paragraph{Blocked-corridor preset.}
A further preset replaced the broad route obstacle by a block that closes
the direct corridor for the ball, leaving a clean route of $13.8$ maze
units against $9.0$ for the direct one.
Every sampling method, P3 included, drove through the block on all 25
trials of one model seed. The model was trained on shortest-path
demonstrations with random endpoints, and none of the methods found
the detour with $N=16$ in these trials.
These trials do not establish that the detour has zero model probability.
A population of $N$ fresh plans finds a route of prior mass $p$ with
probability at most $Np$, so low-probability routes can remain undiscovered.
The medium-maze experiment of Appendix~\ref{app:mb} instead trains on
demonstrations of both routes and uses horizon $H=8$.

\section{Medium maze}
\label{app:mb}

\paragraph{Configuration.}
The medium maze is the D4RL maze2d-medium layout of Gymnasium-Robotics
($8\times 8$ cells).
Between the start cell $(1,1)$ and the goal cell $(6,6)$ the layout
offers two routes: a direct corridor through row~3 of ten cells, and a
detour through the lower-left rooms of thirteen cells.
The \emph{blocked} preset places a power-4 superellipse of semi-axis
$0.6$ on the first cell of the direct corridor, which closes it for the
ball, and a disc of radius $0.6$ over the row-4 corridor before the goal,
which leaves a passage $0.2$ units wide along the corridor's lower wall
(Figure~\ref{fig:envs}).
The preset is ours; DiRecT publishes obstacles for the large layout only.
As on the large maze the walls are enforced by the simulator and are not
part of the constraint set.

\paragraph{Data and model.}
The planning model is trained on $200$ scripted demonstrations of this
single start--goal pair, $100$ through each route, generated by the
same waypoint-following PD controller and action noise as the large-maze
data (Figure~\ref{fig:mb-data}, left).
The model is the Avoiding architecture at $H=8$, $K=20$, base width
$32$, trained for $30$k steps. We omit goal conditioning because every
demonstration and evaluation episode uses the same goal.
Five independently initialised models are used.

\paragraph{Behavior of the unguided model.}
Half the demonstrations use the detour, but only about $1\%$ of the
unguided per-step rollouts enter it across the five evaluation seeds.
Figure~\ref{fig:mb-data} (right) shows rollouts that follow the direct
corridor.
An $8$-step plan covers about $0.07$ maze units.
At the branch cell, the sampled continuation depends strongly on the
conditioning velocity: at rest, $99\%$ of $256$ sampled plans head
toward the direct corridor; with the demonstrations' downward entry
velocity, $100\%$ head toward the detour.
P3 enters the detour in $42\%$ of the evaluation episodes.

\paragraph{Horizon.}
Table~\ref{tab:maze-results} uses $H=8$.
At this horizon, projected plans often stop in front of the block, and
the agent can remain there across repeated replanning calls.

\paragraph{Protocol and settings.}
$50$ trials per seed and method, replanning after every action, at most
$800$ control steps, with methods paired by trial seed.
Unguided sampling uses one particle; Projection pads the obstacles by the
ball radius $0.1$; best-of-$N$, cold SMC and P3 use $N=16$,
$\lambda=\beta=5$ and pad the obstacles by $0.3$ in the potential and
the selector.
P3 uses $t_0=4$.


\paragraph{Results.}
Projection enters the detour in $0.59$ of episodes, compared with $0.42$
for P3, but reaches the goal in only $0.26$.
Its mean accumulated penetration is $4.0$, compared with $7.1$ for P3.
Figure~\ref{fig:route} (right) includes trajectories that stop near the
block, enter the blocked corridor, or take the detour.

\section{AntMaze}
\label{app:antmaze}

\begin{figure}[!t]
    \centering
    \includegraphics{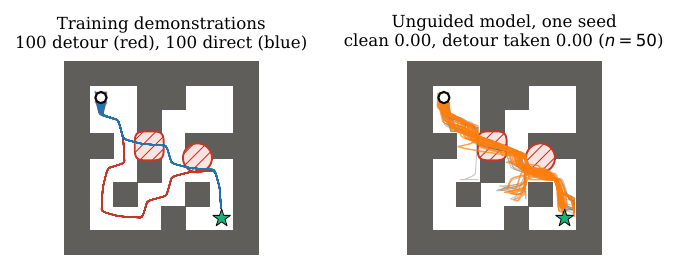}
    \caption{
        Medium maze, blocked. Left: the $200$ training demonstrations, half
        through the detour (red) and half through the direct corridor (blue);
        the obstacles exist only at test time. Right: the unguided $H=8$ model
        takes the direct corridor on every trial in the plotted seed (orange: goal
        after a violation; grey: failure).
    }
    \label{fig:mb-data}
    \vspace{10pt}
    \includegraphics{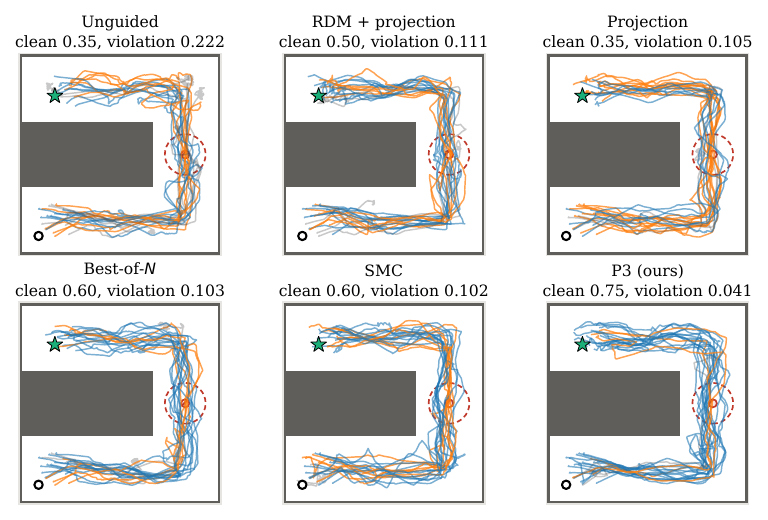}
    \caption{
        Executed AntMaze paths for one block of $20$ trials, population
        methods at $N=32$. Pooled results over all six blocks are in
        Table~\ref{tab:results} for the one-sample methods and in
        Table~\ref{tab:antmaze-n} for SMC and P3.
        Blue: clean success; orange: goal after
        entering the obstacle; grey: failure. Hatched disc: graded region;
        dashed circle: planning clearance.
    }
    \label{fig:antmaze-paths}
\end{figure}

\paragraph{Model and closed-loop protocol.}
We use the $H=32$, $K=50$ goal-conditioned inverse-dynamics checkpoint
trained on \texttt{antmaze-umaze-v1}.  The diffusion model predicts a
$29$-dimensional state trajectory; a two-hidden-layer inverse-dynamics
network with Mish activations maps adjacent predicted states to the
eight Ant torques.  We load the EMA weights, do not clip denoised
states, execute five actions from each plan, and then replan from the
measured state.  Unguided sampling, Projection, best-of-$N$ and SMC begin
each replan from fresh Gaussian noise, and RDM + projection keeps one
plan (Appendix~\ref{app:rdm}).  P3 carries its population and log-weights
across replans: it drops the five executed states of every plan, repeats the
last one to restore the horizon, and re-noises to $t_0=40$, where
$\snr_{t_0}=0.58$.

\paragraph{Constraint and methods.}
The inference-only obstacle is a disc centered at $(4,0)$ in the
vertical corridor.  Its physical radius $0.25$ grades the executed
center path, while every constraint-aware method plans with an
additional $1.0$ clearance margin (Figure~\ref{fig:envs}).  It is
absent from both the training data and MuJoCo collision geometry.
Unguided uses one constraint-unaware plan; RDM + projection keeps one
plan, using the same projector and margins as Projection, and repairs
or regenerates it when the model's likelihood drops or the plan enters
the padded disc. Projection projects
one sample during the last $10\%$ of the reverse chain.  In
Table~\ref{tab:results}, best-of-$N$, SMC, persistent best-of-$N$ and P3
use $N=4$; best-of-$N$ executes the least-violating of $N$ independent
plans.  SMC and P3 use $\lambda_{\rm end}=\beta_{\rm end}=10$,
accumulated weighting, ESS resampling and minimum-violation final
selection, and differ only in P3's persistence; P3 uses no
rejuvenation.

\paragraph{Population size.}
Table~\ref{tab:antmaze-n} gives SMC and P3 at $N=4$, $8$, $16$ and $32$
on the same $120$ episodes.
P3's clean success stays between $0.57$ and $0.62$ at every $N$.
SMC is lower at $N=4$ and $8$ than at $N=32$.
With $N=32$, P3 and SMC are close ($0.61$ and $0.60$), and P3 enters the
disc in $23\%$ of episodes against $30\%$ for SMC.

\begin{table}[h]
    \centering
    \footnotesize
    \caption{AntMaze clean success of SMC and P3 by population
        size $N$, mean $\pm$ standard deviation across six evaluation seeds
        of $20$ trials. Table~\ref{tab:results} reports $N=4$.
    }
    \label{tab:antmaze-n}
\begin{tabular}{@{}rcc@{}}
\toprule
$N$ & SMC & P3 (ours) \\
\midrule
4 & \result{0.53}{0.11} & \result{0.62}{0.10} \\
8 & \result{0.50}{0.12} & \result{0.57}{0.12} \\
16 & \result{0.58}{0.09} & \result{0.57}{0.10} \\
32 & \result{0.60}{0.08} & \result{0.61}{0.10} \\
\bottomrule
\end{tabular}

\end{table}

\paragraph{Tracking and constraint violations.}
Around the turn, the executed state trajectory can differ from the
projected, geometrically feasible plan.
Population sampling selects among model-generated trajectories without
projecting their states, but this does not guarantee clean execution
either (Figure~\ref{fig:antmaze-paths}).

\end{document}